\documentclass[sigconf,nonacm]{acmart}

\setcopyright{none}

\usepackage{graphics}
\usepackage{graphicx}
\usepackage{adjustbox}
\usepackage{xspace}
\usepackage{subcaption}
\usepackage{amsmath}
\usepackage{amsthm}

\usepackage{mathrsfs}
\usepackage{array}
\usepackage{textcomp}
\usepackage{weiwAlgorithm}
\usepackage{url}
\usepackage{diagbox}
\usepackage{lineno}
\usepackage{color}
\usepackage{xcolor}
\usepackage{colortbl}
\usepackage{booktabs}
\usepackage{listings}
\usepackage{tcolorbox}
\tcbuselibrary{breakable}
\usepackage{setspace}
\usepackage{multirow}
\usepackage{tabularx}
\usepackage{makecell}
\usepackage{enumitem}
\usepackage{float}
\usepackage{tikz}
\usetikzlibrary{arrows.meta,positioning,shapes.geometric,fit,calc,backgrounds}

\usepackage[normalem]{ulem}
\useunder{\uline}{\ul}{}

\definecolor{kgblue}{RGB}{55,115,179}
\definecolor{mainresultblue}{HTML}{DAE8FC}
\definecolor{skillgreen}{RGB}{46,153,83}
\definecolor{evidenceorange}{RGB}{220,130,30}
\definecolor{softgray}{RGB}{245,247,250}
\definecolor{darkgray}{RGB}{55,60,68}
\definecolor{warnred}{RGB}{196,77,88}
\definecolor{purplemem}{RGB}{126,87,194}

\newtheorem{proposition}{Proposition}

\newtheorem{definition}{Definition}

\newcommand{\method}{SkillZip\xspace}
\newcommand{\alfworld}{ALFWorld\xspace}
\newcommand{\AlgBadge}[2][black]{%
  \begingroup
  \setlength{\fboxsep}{1.3pt}%
  \fcolorbox{#1}{white}{\textcolor{#1}{\scriptsize\sffamily\bfseries #2}}%
  \endgroup
}
\newcommand{\SecTool}[1]{\ensuremath{\mathop{\text{\AlgBadge[kgblue]{#1}}}\nolimits}}
\newcommand{\ZipTool}[1]{\ensuremath{\mathop{\text{\AlgBadge[evidenceorange]{#1}}}\nolimits}}
\newcommand{\HydTool}[1]{\ensuremath{\mathop{\text{\AlgBadge[purplemem]{#1}}}\nolimits}}
\newcommand{\UpdTool}[1]{\ensuremath{\mathop{\text{\AlgBadge[warnred]{#1}}}\nolimits}}
\newcommand{\ChkTool}[1]{\ensuremath{\mathop{\text{\AlgBadge[skillgreen]{#1}}}\nolimits}}
\newcommand{\EvalTool}[1]{\ensuremath{\mathop{\text{\AlgBadge[darkgray]{#1}}}\nolimits}}
\newcommand{\myparagraph}[1]{\vspace{0.5mm}\noindent\textbf{#1}.}

\newcommand{\myparagraphunderline}[1]{\vspace{0.5mm}\noindent\underline{#1.}}
\newcommand{\myparagraphquestion}[1]{\vspace{0.5mm}\noindent\textbf{#1?}\enspace}
\newcommand{\myparagraphunderlinenew}[1]{\vspace{0.5mm}\noindent\underline{#1,}}

\newcommand{\eg}{{e.g.,}\xspace}

\AtBeginDocument{%
  }

\title{SkillZip: Contract-Preserving Graph Compression
for Scalable Agent Skill Libraries}

\author{Xingyu Tan}
\orcid{0009-0000-7232-7051}
\affiliation{%
  \institution{UNSW \& CSIRO}
  \city{Sydney}
  \country{Australia}}
\email{xingyu.tan@unsw.edu.au}

\author{Xiaoyang Wang}
\orcid{0000-0003-3554-3219}
\affiliation{%
  \institution{UNSW}
  \city{Sydney}
  \country{Australia}}
\email{xiaoyang.wang1@unsw.edu.au}

\author{Qing Liu}
\orcid{0000-0001-7895-9551}
\affiliation{%
  \institution{CSIRO}
  \city{Hobart}
  \country{Australia}}
\email{q.liu@csiro.au}

\author{Xiwei Xu}
\orcid{0000-0002-2273-1862}
\affiliation{%
  \institution{CSIRO}
  \city{Sydney}
  \country{Australia}}
\email{xiwei.xu@csiro.au}

\author{Xin Yuan}
\orcid{0000-0002-9167-1613}
\affiliation{%
  \institution{CSIRO \& UNSW}
  \city{Sydney}
  \country{Australia}}
\email{xin.yuan@csiro.au}

\author{Liming Zhu}
\orcid{0000-0001-5839-3765}
\affiliation{%
  \institution{CSIRO}
  \city{Sydney}
  \country{Australia}}
\email{liming.zhu@csiro.au}

\author{Wenjie Zhang}
\orcid{0000-0001-6572-2600}
\affiliation{%
  \institution{UNSW}
  \city{Sydney}
  \country{Australia}}
\email{wenjie.zhang@unsw.edu.au}

\renewcommand{\shortauthors}{Xingyu Tan et al.}

\begin{document}

\begin{abstract}
Large Language Models (LLMs) increasingly act as agents whose procedural knowledge is stored in reusable skill packages and loaded at inference time. As skill libraries grow, a central challenge is to expose the smallest sufficient executable context under a limited context budget. Existing systems struggle to reuse routines below the whole-skill level, preserve procedural contracts during compression, keep compressed routines executable and expandable, and update the compressed library as skills evolve.  These challenges reveal a unit mismatch: skills are retrieved as packages, compressed as text, and converted into execution graphs only after retrieval, whereas reliable reuse requires a contract-bearing procedural unit.  We propose \textbf{SkillZip}, an execution-aware procedural abstraction framework that performs contract-preserving compression over section-level graphs. SkillZip rewrites recurring contract-valid motifs into reversible ported macros while preserving boundary signatures, dependency closure, verifier reachability, and source-level expansion. At inference time, it hydrates a compact, dependency-closed context and expands macros only when required. ReZip further integrates new skills and revises risky macros using execution evidence. Comprehensive experiments on technical and embodied agent benchmarks show SkillZip consistently outperforms the strongest baseline by up to 12.2 points, while achieving a 3.46$\times$ compression ratio with 99.2\% dependency preservation and 98.7\% verifier reachability. Scaling analyses further confirm robust retrieval across skill libraries ranging from 200 to 100K skills.
\end{abstract}

\keywords{LLM Agents, Agent Skills, Procedural Memory, Graph Compression, Skill Retrieval}

\maketitle

\section{Introduction}
\label{sec:intro}
\vspace{-1mm}

\begin{figure}[t]
    \centering
    \includegraphics[width=0.99\columnwidth]{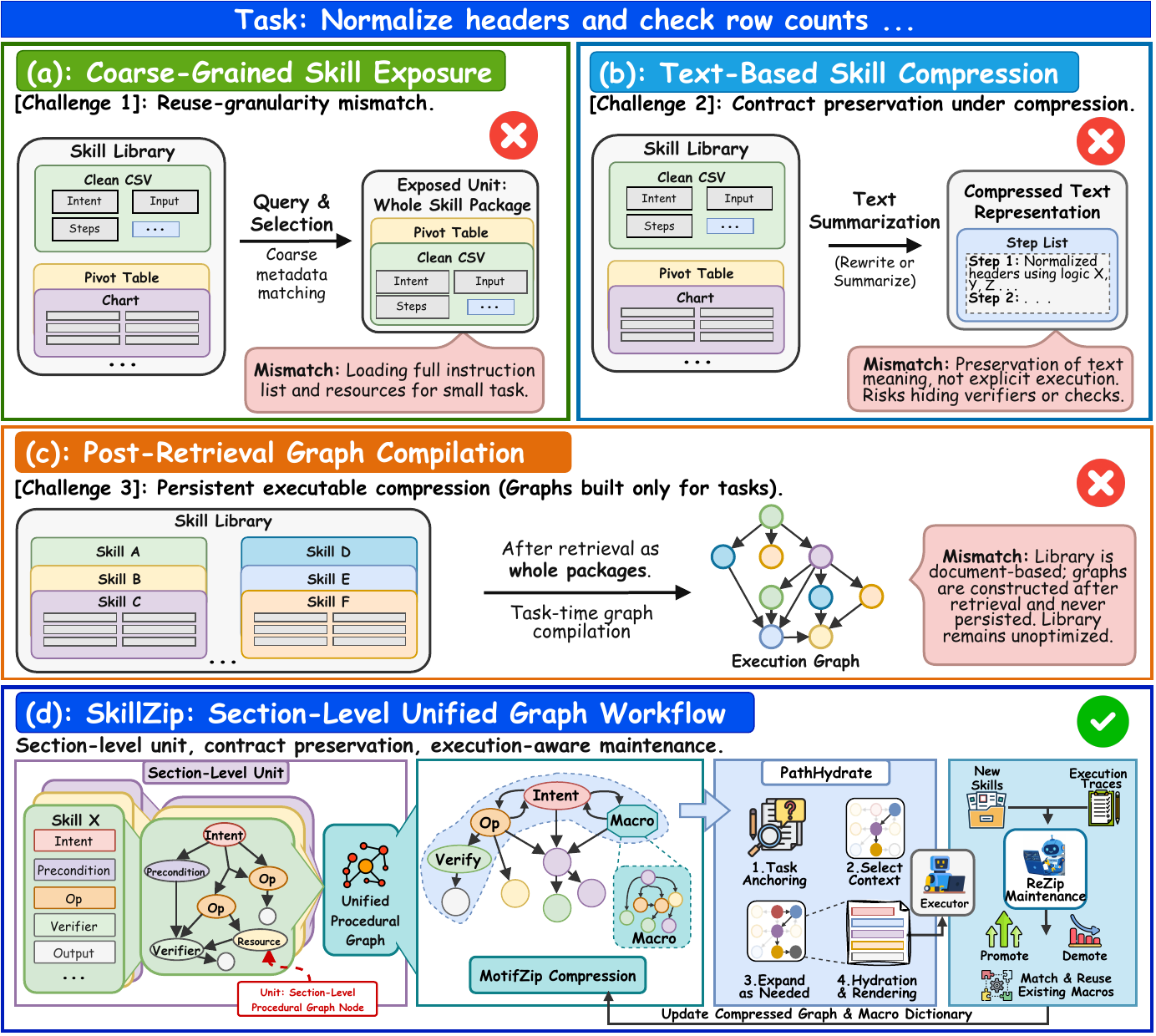}
    \vspace{-3mm}
    \caption{Representative skill-library workflows.}
    \label{fig:baseline_workflows}
    \vspace{-2mm}
\end{figure}

Large Language Models (LLMs) are increasingly used as agents that interact with
tools and environments \cite{react, toolllm}. In these settings, task success
often depends on procedural rather than factual knowledge, such as how to normalize a spreadsheet or verify that an artifact is
correct. Agent skills have
therefore emerged as an external procedural memory layer for LLM agents
\cite{agentskills, skillsbench}. A skill package stores instructions,
supporting resources, and rules for execution and verification in an editable
artifact loaded at inference time \cite{agentskills}, allowing procedures to
change without retraining. At inference time, a skill provider selects
task-relevant procedures from the library and exposes them to the agent.
As the library grows, the provider must distinguish among more overlapping
packages while avoiding incomplete or redundant context. This raises a central
question. How can a skill provider retrieve and expose the smallest sufficient
executable context within a tight context budget?

Existing solutions retrieve each skill package as a whole. However, a skill is
not an atomic text unit, but a collection of functional sections such as
intents, operations, verifiers, and outputs.
A system should therefore retrieve the
smallest execution-complete subset rather than read every relevant skill from
beginning to end. Together with their typed dependencies and verifier hooks,
they define the skill's \textbf{procedural contract}: what the
procedure exposes, how it executes, and how its effects are verified.
This issue becomes more pronounced when a task requires multiple skills, since
their useful procedures may overlap. For example, skills of \emph{Clean CSV} and \emph{Pivot Tables} may share schema inference, header
normalization, and row-count validation, while differing in their downstream
operations and verifiers. Instead of repeatedly loading each full package, the
system should retrieve the shared contract-compatible procedure once and add
only the task-specific sections.

\vspace{1mm}
\myparagraph{Challenges in existing methods}
Most existing skill-library systems can be viewed as following a
``retrieve-compress-execute'' pipeline. In this paradigm, an agent
first retrieves relevant skill packages or metadata, optionally
compresses the selected skill content to reduce prompt cost, and
then builds a task-time execution context or graph for the selected
skills. While this improves modularity and token efficiency, several
challenges remain.

\vspace{0.3mm}
\myparagraphunderline{Challenge 1: Reuse-granularity mismatch}
Existing progressive-disclosure and skill-graph systems \cite{agentskills,graphofskills,
groupofskills}, as shown in Figure~\ref{fig:baseline_workflows}(a),
use the whole skill as the retrieval unit. They reduce the number of packages
loaded, but cannot select only the relevant procedure within each package. For
example, a query to normalize headers and verify row counts
matches both the \emph{Clean CSV} and
\emph{Pivot Table} skills because they contain the same routine. A skill-level
retriever therefore loads both packages, although the task needs only their
shared sections. Retrieving this routine at the section level instead avoids
unrelated downstream operations, reducing both excess context and ambiguity
between overlapping skills. This ambiguity compounds as the library grows,
because every overlapping package enters retrieval as another coarse candidate.

\myparagraphunderline{Challenge 2: Contract preservation under compression}
Current skill-compression methods \cite{skillreducer,skillee,skim}, as shown in Figure~\ref{fig:baseline_workflows}(b), shorten skills by rewriting, debloating, or encoding their content into compact text or sequence representations. These methods optimize the token budget directly, but 
their preserved structure is mainly textual or latent, rather than an explicit procedural contract.
For executable skills, textual closeness does not imply contract equivalence: a compressed skill can stay close to the original wording while obscuring a precondition, guard branch, or verifier hook. For example, the two relevant skills above contain near-identical loading routines that feed different verifiers;
compressing them as similar spans can blur incompatible checks and make the resulting procedure unsafe to execute.

\vspace{0.3mm}
\myparagraphunderline{Challenge 3: Persistent executable compression}
Some task-time execution-graph systems~\cite{skilldag, graphofskills, agentskillos}, as shown in
Figure~\ref{fig:baseline_workflows}(c), organize already selected skills into
explicit procedural graphs and support verification or repair during execution.
However, these graphs are built after retrieval, over skills that have already
been selected, so the library itself is not stored as a persistent compressed
structure. 
As a result, the
recurring loading-and-validation routine is re-discovered and re-verified on
every task that touches it, rather than compressed into the units later trusted
by execution.

\vspace{0.3mm}
\myparagraphunderline{Challenge 4: Execution-aware maintenance}
Skill libraries are not static because new skills reveal recurring procedures, while
execution traces reveal which abstractions are reliable or risky
\cite{skillpro,reasoningbank}. A one-shot compressor can neither recognize a
routine that becomes reusable only after later skills arrive nor revise a macro
that repeatedly triggers expansion, verifier failure, or downstream repair.
For example, later spreadsheet skills may establish period alignment and balance
checking as a reusable routine, whereas formula-bearing tasks may expose a
generic export macro that lacks a required verifier. 
Compression should therefore be maintained as the library
grows and as execution evidence accumulates.

\vspace{0.3mm}
These challenges reveal a common unit mismatch. Current skill provider systems retrieve whole
skill packages, compress skill text, and build execution graphs only after
retrieval, i.e., three decisions made over three different units. For agent skills,
they should target the same object: a contract-bearing section subgraph
whose procedural contract survives retrieval, compression, and execution alike.
We call a compression
\textbf{contract-preserving} if it shortens a routine while retaining the
interface, execution, and verification aspects of this contract. Source pointers
are preserved separately so that the routine can be expanded back to its
original sections when needed.

\vspace{1mm}
\myparagraph{Contribution}
In this paper, we introduce \textbf{\method}, 
a contract-preserving graph
compression framework for scalable agent skill libraries
as shown in Figure~\ref{fig:baseline_workflows}(d). \method
changes the basic representation unit from whole skill packages to
source-grounded, contract-bearing section-level unit connected by procedural dependencies.
Over this unified representation, \method performs \textbf{execution-aware procedural
abstraction} through contract-preserving compression, i.e., recurring subgraphs are
rewritten as reusable macros only when their procedural contracts remain explicit
and recoverable. The resulting graph provides a common basis for exposing reusable
routines and hydrating task-specific executable context.

\myparagraphunderlinenew{To address reuse-granularity mismatch}
\method performs Sec2-Graph, 
which opens each skill package into source-grounded 
section nodes. It represents functional sections with distinct execution roles (\eg intents, inputs, operations, verifiers, and output) as reusable procedural units, making internal skill components visible without manually splitting the skill library.

\vspace{0.3mm}
\myparagraphunderlinenew{To preserve procedural contracts under compression}
SkillZip introduces
MotifZip, a contract-preserving compressor that mines recurring section-level
motifs and promotes them into ported macro nodes only when their boundary
signatures, dependency closure, verifier reachability, and source expansion are
preserved. Each macro is therefore a reversible rewrite instead of a lossy
summary.

\vspace{0.3mm}
\myparagraphunderlinenew{To make compressed routines executable and expandable}
\method employs  PathHydrate, which routes over the compressed graph at query
time. It constructs a compact dependency-closed procedural subgraph, repairs
missing execution roles when necessary, and renders each macro at the lowest
sufficient level, including name, contract, outline, or full source.

\vspace{0.3mm}
\myparagraphunderlinenew{To maintain compression as the library evolves}
\method introduces ReZip, which incrementally updates the compressed graph as
new skills and execution traces arrive. It reuses existing macros for compatible
regions, promotes recurring contract-valid residuals into new macros, and uses
execution evidence to increase hydration detail, split, or retire risky macros.
In summary, the advantages of \method can be abbreviated as
follows:

\begin{itemize}[leftmargin=*]
\item \textbf{Section-level procedural memory.}
\method organizes source-grounded, contract-bearing sections as reusable units
of procedural memory and represents each skill as an executable graph over
them. This representation exposes internal reuse while retaining execution
roles, dependencies, verifiers, and provenance.

\item \textbf{Contract-preserving macro compression.}
\method replaces recurring procedural motifs with ported macro nodes while
preserving boundary contracts, dependency closure, verifier reachability, and
reversible source expansion.

\item \textbf{Budgeted executable context hydration.}
\method retrieves compact, dependency-closed procedural subgraphs from the
compressed library and progressively expands macros only when execution
requires more detail.

\item \textbf{Execution-aware incremental maintenance.}
\method incrementally updates its macro dictionary through new-skill matching,
residual motif promotion, and risky macro revision, keeping compressed
procedures aligned with execution evidence.

\item \textbf{Effectiveness, efficiency, and scalability.}
\textbf{(a)} \method operates as a plug-and-play procedural-memory layer across six LLM
backbones on both technical and embodied benchmarks without backbone-specific
fine-tuning.
\textbf{(b)} \method achieves a 3.46$\times$ compression ratio and a 71.0\% reduction in
active storage while retaining 99.2\% dependency preservation and 98.7\%
verifier reachability. Its retrieval advantage remains as the library scales
from 200 to 100K skills.
\textbf{(c)} \method achieves the best end-task performance in every directly comparable
setting, outperforming the
strongest baseline \textsc{SkillDAG}  by up to 12.2 points on \alfworld.
\end{itemize}

\vspace{-3mm}

\section{Related Work}
\label{sec:related}
\vspace{-1mm}

\myparagraph{Agent skills and procedural memory}
Tool-augmented agents combine LLM reasoning with external actions through
prompting, learned tool invocation, and large API collections
\cite{react,toolformer,apibank,toolllm}. Beyond individual calls, reusable
procedural knowledge allows agents to accumulate executable programs, distill
feedback into experience, or retrieve workflows from prior trajectories
\cite{voyager,reflexion,expel,awm}. Agent Skills~\cite{agentskills}
formalizes this idea as deployable packages containing instructions, scripts,
references, and resources. Recent studies organize the broader skill lifecycle
through a unified taxonomy~\cite{agentskillssurvey}, characterize redundancy
and safety properties in real-world skill ecosystems~\cite{claudeskillsanalysis},
and acquire reusable skills from web interaction or heterogeneous scientific
resources~\cite{skillweaver,skillfoundry}. Systems and benchmarks further study
skill creation, management, retrieval, generation, and compatibility at scale
\cite{agentskillos,skillnet,skillsbench,skillret,sra,skillgenbench,sweskillsbench}.
Together, these studies establish skills as durable procedural memory, but they
typically reuse whole skill packages and leave recurring internal procedures
across skills under-modeled.

\myparagraph{Skill retrieval and execution}
A major challenge in large skill libraries is retrieving useful skills while
keeping the execution context coherent. Prior work addresses this problem by
progressively loading skill metadata before full bodies~\cite{agentskills},
organizing skills through dependencies, groups, conflicts, or specializations
\cite{graphofskills,groupofskills,skilldag}, and adapting retrieval across
different levels of skill granularity~\cite{skillrae,skilllens}. Other systems
construct execution graphs after skill selection~\cite{grasp,agentskillos}.
SkillGraph~\cite{skillgraph} retrieves ordered subgraphs from an evolving
skill-level dependency graph, while SkillOps~\cite{skillops} combines typed
skill contracts with an ecosystem graph for library diagnosis and maintenance.
Related graph-based retrieval and routing methods use topology to
preserve multi-step context
\cite{whengraphsrag,graphplanner,hiprag}. These methods improve
which skills are selected and how selected skills are coordinated. However,
their main focus remains skill-level retrieval or task-time orchestration.

\myparagraph{Skill and graph compression}
Prompt and skill compression methods shorten contexts through token selection,
rewriting, or compact representations
\cite{llmlingua,longllmlingua,llmlingua2,skillreducer,skillee,skim}.
Procedural-memory methods further abstract successful experience into reusable
skills, rules, or memories
\cite{ecs,skillpro,reasoningbank,mem1,memgen,after}. In parallel, graph mining
and summarization provide tools for discovering and compressing repeated
structure, including frequent-subgraph mining, MDL-based summaries,
grammar-based replacement, and incremental maintenance
\cite{subdue1994,gspan2002,gaston2004,snap2008,grass2010,vog2014,sweg2019,ssumm2020,slugger2022,graphgrammar2018,mosso2020}.
These techniques provide useful foundations, but agent skill libraries require
compression that preserves execution interfaces, dependency closure, verifier
reachability, and source provenance. 

More detailed related work is discussed in
Appendix~\ref{appendix:detailed_related_work}.

\vspace{-2mm}
\section{Preliminaries}
\label{sec:prelim}
\vspace{-1mm}

Consider an agent skill library $\mathcal{S}=\{s_1,\ldots,s_n\}$.
Each package $s\in\mathcal{S}$ contains procedural artifacts such as
instructions, scripts, tests, and resources. A conventional retriever maps a
query $q$ to packages $R(q)\subseteq\mathcal{S}$. We retain these packages as
source provenance but model the library at a finer procedural granularity.

\begin{figure*}[t]
    \centering
    \includegraphics[width=0.99\textwidth]{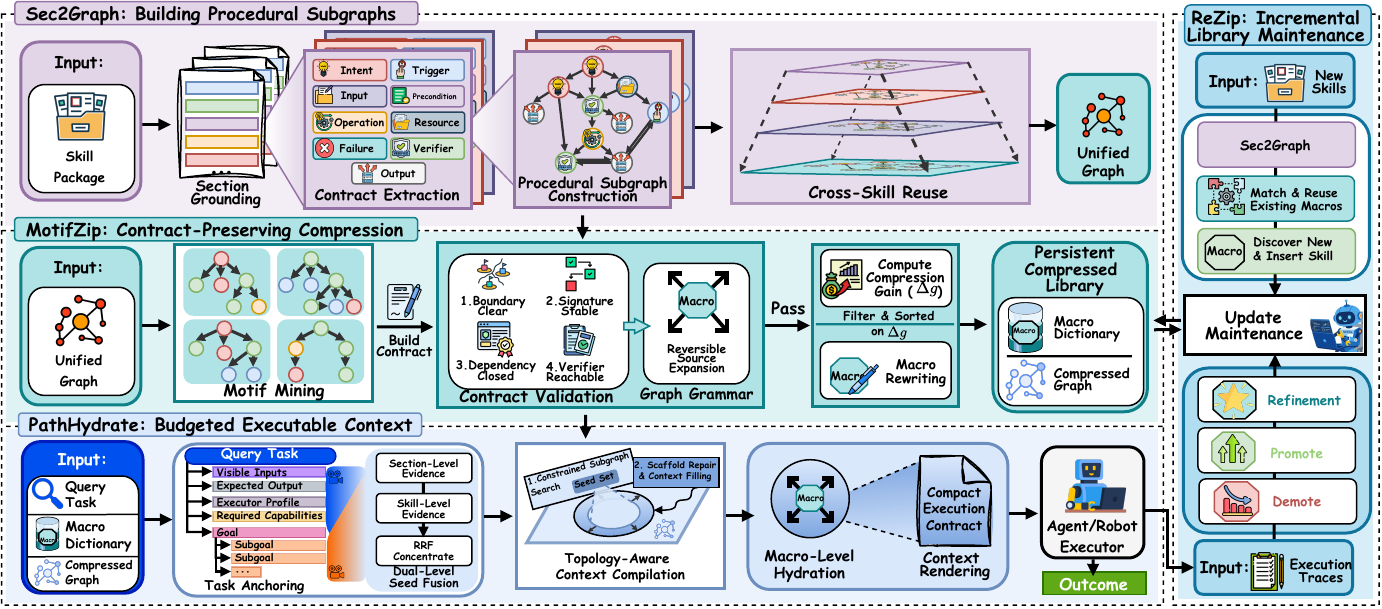}
    \caption{Overview of the \method framework. Sec2Graph retains occurrence-specific sections
    and links compatible ones through canonical prototypes; MotifZip rewrites
    recurring contract-valid subgraphs as reversible macros; PathHydrate
    compiles a budgeted executable context; and ReZip updates the compressed
    library from new skills and execution feedback.}
    \label{fig:method}
    \vspace{-3mm}
\end{figure*}


\begin{definition}[Section node]
A section node is a typed procedural unit
$v=\langle \tau_v,c_v,X_v,Y_v,R_v,G_v,\mathrm{src}_v\rangle$, where
$\tau_v$ is the execution role, $c_v$ is the section content, $X_v$ and $Y_v$
are input and output signatures, $R_v$ records resources or tools, $G_v$
records guard or verifier conditions, and $\mathrm{src}_v$ points to the
original skill source.
\end{definition}

We use nine operational roles: \textsc{Intent}, \textsc{Trigger},
\textsc{Input}, \textsc{Precondition}, \textsc{Operation},
\textsc{Resource}, \textsc{Failure}, \textsc{Verifier}, and
\textsc{Output}. These roles describe a section's procedural function rather
than its position in a source file. Each source occurrence remains a distinct
section node. Contract-compatible occurrences may additionally link to a
shared canonical prototype, which exposes reuse without erasing skill
membership, multiplicity, or occurrence-specific source pointers.
A prototype is an auxiliary node that stores a normalized role and contract
signature plus links to its source occurrences; it is not itself a source
occurrence.

\begin{definition}[Procedural skill graph]
Given $\mathcal{S}$, a procedural skill graph is
$
\mathcal{G}=(\mathcal{V},\mathcal{E}_{dep},\mathcal{E}_{skill},
\mathcal{E}_{res},\mathcal{E}_{eq}),
$
where $\mathcal{V}=\mathcal{V}_{occ}\cup\mathcal{V}_{proto}$ contains
occurrence-specific section nodes and auxiliary prototypes,
$\mathcal{E}_{dep}$ stores typed procedural dependencies,
$\mathcal{E}_{skill}$ stores skill membership edges, $\mathcal{E}_{res}$ links
sections to external resources, and
$\mathcal{E}_{eq}$ links source occurrences to compatible canonical
prototypes. We use
$\mathcal{E}=\mathcal{E}_{dep}\cup\mathcal{E}_{skill}\cup
\mathcal{E}_{res}\cup\mathcal{E}_{eq}$ to denote the full edge set.
\end{definition}

A dependency edge is $e=(u,v,\rho_e)$, where $u,v\in\mathcal{V}$ and $\rho_e$
is a typed relation such as \textsc{Requires}, \textsc{Binds},
\textsc{UsesResource}, \textsc{Verifies}, or \textsc{Repairs}. Multiple typed
edges express multi-input and multi-output procedures, while subgraph ports
expose their external contract.

The fields $(X_v,Y_v,R_v,G_v)$, typed edges, and ports together form a
procedural contract with three aspects: \emph{interface} covers typed I/O and
resource bindings; \emph{execution} covers preconditions, dependencies, guards,
effects, and failure handling; and \emph{verification} covers success conditions
and verifier hooks. The aspects are validated separately, while
$\mathrm{src}_v$ retains provenance for reversible expansion.

\begin{definition}[Skill procedural subgraph]
A skill $s$ is represented as an executable procedural subgraph
$h_s=(V_s,E_s,r_s,t_s)$ inside $\mathcal{G}$, where
$V_s\subseteq\mathcal{V}_{occ}$, $E_s\subseteq\mathcal{E}$, $r_s$ is an intent or
trigger root, and $t_s$ is a verified output section. The subgraph connects all
procedural roles required from the root to the verified output.
\end{definition}

\begin{definition}[Macro node]
A ported macro node $M_g:I_g\Rightarrow O_g$ compresses a connected section
subgraph $g=(V_g,E_g)$ with boundary inputs $I_g$, boundary outputs
$O_g$, and expansion rule
$
M_g\Rightarrow (V_g,E_g,\chi_g),
$
where $\chi_g$ is the procedural contract of the macro, retaining the
interface, execution, and verification aspects of $g$. Source pointers attached
to the stored subgraph support reversible expansion.
\end{definition}

For an operation or macro node $u$ and verifier node $z$ in a procedural
subgraph $P=(V_P,E_P)$, $z$ is reachable from $u$ if a directed path
$u\leadsto z$ exists in $(V_P,E_P\cap\mathcal{E}_{dep})$. Verifier reachability
holds when every state-changing operation or macro node in $P$ has at least one
such verifier. A valid macro preserves this condition and all three contract
aspects across compatible occurrences.

\myparagraph{Problem statement}
Given a skill library $\mathcal{S}$, \method builds a raw section graph
$\mathcal{G}$ and a macro dictionary $\mathcal{M}$, producing a compressed graph
$\mathcal{G}_{zip}=\mathcal{G}/\mathcal{M}$. Given a task query $q$, executor
profile $p$, and context budget $B$, \method as the skill provider returns a rendered context
$C_q$ to the executor by hydrating $P_q\subseteq\mathcal{G}_{zip}$. A valid $P_q$ must cover
query anchors, close required dependencies, keep verifiers reachable, and
remain expandable to source sections when needed.

\section{Method}
\label{sec:method}
\vspace{-1mm}

\method is built around one central requirement: a compressed skill library must
remain executable. A shorter context is not enough if it drops an input
contract, hides a guard, or separates an operation from its verifier. We
therefore keep the procedural contract explicit throughout the pipeline. 
As
shown in Figure~\ref{fig:method}, 
\method first opens each package into
source-grounded typed procedural sections, then compresses repeated section
subgraphs into reversible macros, and finally hydrates only the parts needed
by the current task. When the library evolves, ReZip uses execution evidence
to promote newly reusable routines and revise macros that become unsafe.



\vspace{-3mm}
\subsection{Sec2Graph: Building Procedural Subgraphs}
\label{sec:method:sec2graph}


Sec2Graph addresses the first obstacle in skill compression: a whole skill
package is too coarse to reveal reusable or execution-critical parts. Since a
package may mix instructions, scripts, schemas, tests, examples, and warnings,
Sec2Graph converts it into a source-grounded procedural subgraph that MotifZip
can compress and PathHydrate can query. The pseudocode is shown in
Appendix~\ref{appendix:alg:sec2graph}.

\myparagraph{Section node construction}
We first convert package content into typed section nodes with explicit sources
and procedural contracts.


\myparagraphunderline{Section grounding}
Sec2Graph grounds a skill package by splitting it into source-traceable
sections and assigning each section an execution role. Given a skill package
$s$, it first segments candidate sections
$\mathcal{B}_s=\textsf{SegmentSkill}(s)$ using headings, lists, code blocks,
warnings, argument descriptions, tool references, and tests as boundary cues.
When these cues are incomplete, a model-assisted parser refines the boundaries
while preserving source pointers. It then assigns each candidate
$b\in\mathcal{B}_s$ an execution role $\tau_b=\textsf{InferRole}(b)$, producing
role-labeled units for later contract extraction.
For example, in a tabular skill, ``infer the delimiter'' is an \textsc{Operation}, while ``verify that row counts are unchanged'' is a \textsc{Verifier}.


\myparagraphunderline{Contract extraction}
Typing alone is insufficient because similar operation sections may require
different inputs or checks. Sec2Graph extracts a compact local contract for each
section node $v$, including typed I/O $(X_v,Y_v)$ and resources $R_v$ for the
interface, while guards and verifier conditions are recorded in $G_v$.
Dependency, repair, and verifier edges make execution and validation relationships explicit, and the source pointer $\mathrm{src}_v$ keeps the original package traceable.


\myparagraph{Procedural subgraph construction}
After section nodes are grounded, Sec2Graph connects them into an executable
procedural subgraph. Weak-order edges preserve local order inside a skill;
dependency edges bind operations to inputs, preconditions, and resources;
verifier edges connect operations to reachable checks; and repair edges connect
failure handlers to the guards that trigger them. A skill membership edge
records which nodes belong to the same source package. For each skill, Sec2Graph
marks an intent or trigger node as the root and a verified output node as the
terminal node. This turns the package from a text bundle into a structured
procedure with explicit start, requirements, actions, and validation.

\myparagraph{Cross-skill reuse}
Finally, Sec2Graph exposes reuse across the library. Compatible sections remain
occurrence-specific nodes and are linked to a shared canonical prototype only
when their roles, boundary signatures, resources, and verifier behavior agree.
The prototype makes common routines visible, such as tabular ingestion shared
by cleaning, pivoting, and plotting skills, while each occurrence keeps its
skill membership, local edges, and source pointer. Motif support is counted
over source occurrences rather than prototypes. The output is the persistent
section graph $\mathcal{G}$, which supports structural compression without
losing occurrence identity.

\vspace{-3mm}
\subsection{MotifZip: Contract-Preserving Compression}
\label{sec:method:motifzip}
Once Sec2Graph exposes the library as procedural subgraphs, repeated routines
become visible. The difficulty is deciding which repetitions are safe to
compress. Surface similarity
alone is unsafe, i.e., two spans may both describe
``load data'' while requiring
different outputs or verifiers.
MotifZip therefore uses a lightweight typed
graph grammar to mine recurring typed subgraphs and replace only
contract-valid ones with reversible macros. Each macro records typed ports,
executable contracts, verifier hooks, and source expansion pointers. The
pseudocode is shown in Appendix~\ref{appendix:alg:motifzip}.

\myparagraph{Interface-aware motif mining}
MotifZip avoids unrestricted frequent subgraph mining over the whole library.
It first proposes candidates from compatible execution interfaces, and then
counts only occurrences whose boundary behavior is consistent.

\myparagraphunderline{Typed candidate generation}
Because skill procedures contain typed roles, directed dependencies, and
weak-order edges, candidate search starts from execution interfaces. Sections
are grouped by role signature, resource family, and I/O shape, and candidates
are grown along dependency and weak-order edges:
\begin{equation}
\mathcal{C}=\textsf{GrowMotifs}(\textsf{BucketBySignature}(\mathcal{G}),\mathcal{G}).
\label{eq:motif_candidates}
\end{equation}
Prototype links provide the grouping index, but motif growth and occurrence
matching are performed on the occurrence-specific procedural graph. Semantic
similarity is used only after this structural compatibility check. Each
candidate motif is represented as a typed attributed subgraph
$g=(V_g,E_g,\ell_g)$, where $\ell_g$ records role labels, I/O signatures,
resource families, and verifier tags. This typed bucketing restricts motif
growth to interface-compatible neighborhoods instead of comparing arbitrary
subgraphs across the full library.

\myparagraphunderline{Occurrence support}
A candidate motif is useful only if it reappears with compatible external
behavior. MotifZip therefore matches each $g$ back to the raw graph and records
its occurrences. An occurrence $\omega=(\psi_\omega,\phi_\omega)$ maps motif
nodes to section nodes through $\psi_\omega$ and motif ports to
boundary-crossing edges through $\phi_\omega$. Support is counted over
non-conflicting occurrences: two occurrences conflict if they share an internal
source section node or require incompatible port reconnections. Thus, support
measures reusable procedural structure rather than repeated text spans.
Canonical prototypes guide candidate matching but are not counted as
occurrences.

\myparagraph{Contract-preserving grammar construction}
After motif support is established, MotifZip checks whether a supported
subgraph can be replaced without changing how the surrounding graph calls,
executes, or verifies it. This stage validates the macro contract and records a
reversible grammar rule.

\myparagraphunderline{Contract validation}
For each supported motif, MotifZip builds a macro boundary and contract
\((I_g,O_g,\chi_g)=\textsf{BuildContract}(g,\Omega_g)\), where
$\Omega_g$ is the set of compatible occurrences. The motif is accepted only if
three conditions hold. First, its interface is stable: input/output ports,
role signatures, and resource requirements agree across occurrences. Second,
its execution is closed: dependencies are either internal to the motif or
explicitly exposed through macro ports. Third, its verification remains
reachable: every state-changing operation keeps its verifier inside macro
contract or reachable from macro output. These checks prevent compression
from hiding cross-boundary dependencies or detaching operations from their
required checks, so each macro preserves its 
recorded executable contract.

\myparagraphunderline{Graph grammar rule}
An accepted motif becomes a production rule in a typed attributed graph grammar:
\begin{equation}
M_g[I_g,O_g]\Rightarrow (V_g,E_g,\pi_g,\chi_g),
\label{eq:grammar_rule} 
\end{equation}
where $M_g$ is a nonterminal macro node, $I_g$ and $O_g$ are typed ports,
$\pi_g$ records occurrence-specific node and port mappings, and $\chi_g$ is the
executable contract. Rewriting an occurrence removes only the internal nodes of
the matched motif, inserts $M_g$, and reconnects external edges through the port
map $\phi_\omega$. Expansion performs the inverse operation and restores the
source-grounded section graph.

\myparagraph{Macro selection and representation}
The last step decides which contract-valid motifs should actually enter the
macro dictionary. MotifZip first scores the candidate by compression benefit and
execution risk, then performs conflict-aware rewriting.

\myparagraphunderline{Compression gain}
Not every valid motif is worth compressing. MotifZip scores each motif by a
local description-length gain:
\begin{equation}
\begin{aligned}
\Delta(g) =~~&
\mathrm{freq}(g)L(g)-L(M_g)-L(\mathrm{rule}_g)\\
&+\alpha\,\mathrm{Reuse}(g)-\lambda\,\mathrm{Cut}(g)
-\mu\,\mathrm{Risk}(g).
\end{aligned}
\label{eq:motif_gain}
\end{equation}
Here $\mathrm{freq}(g)$ is the number of non-conflicting occurrences selected
for rewriting. The first term rewards replacing repeated instances of $g$ with
one macro and one expansion rule. $\mathrm{Reuse}(g)$ favors reuse across
distinct skills or task families; $\mathrm{Cut}(g)$ penalizes boundary loss; and
$\mathrm{Risk}(g)$ penalizes weak verifier support or ambiguous contracts.
The nonnegative weights $\alpha$, $\lambda$, and $\mu$ control these
MotifZip-specific terms.
Thus, the score selects a macro only when its description-length saving
justifies the execution risk.

\myparagraphunderline{Macro rewriting}
Candidates are processed greedily in descending $\Delta(g)$. MotifZip skips
overlapping rewrites, creates a ported macro for each accepted motif, and stores
the production rule, occurrence mappings, boundary ports, verifier hooks, source
pointers, and rendering levels. 
A macro can be rendered as a name, contract,
outline, or full source. It is therefore a reversible procedural rewrite rule,
not an opaque summary. The conflict-aware greedy policy keeps rewriting
tractable over large libraries and does
not claim global optimality over all overlapping
motifs.

\begin{proposition}[Compositional structural lifting]
\label{prop:structural_lifting}
Let $\Omega$ be a set of pairwise non-conflicting motif occurrences accepted by
MotifZip. Denote $\kappa_{\Omega}$, $\xi_{\Omega}$ as their simultaneous
macro rewriting and source expansion. If a raw procedural subgraph $P$
contains, for each occurrence in $\Omega$, either all or none of its internal
nodes, then $\xi_{\Omega}(\kappa_{\Omega}(P))\cong P$ up to auxiliary prototype
links, and isomorphism preserves typed external dependencies and
operation-to-verifier reachability.
\end{proposition}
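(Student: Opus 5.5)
The plan is to reduce the simultaneous statement to a single rewrite and then compose. First I will fix the setting precisely. For one accepted occurrence $\omega=(\psi_\omega,\phi_\omega)$ of motif $g$, the rewrite $\kappa_\omega$ deletes the internal nodes $\psi_\omega(V_g)$ and their incident internal edges, inserts a fresh node $M_g$, and redirects every boundary-crossing edge of $P$ through the port map $\phi_\omega$ to the corresponding port of $I_g$ or $O_g$. The expansion $\xi_\omega$ uses the stored rule $M_g[I_g,O_g]\Rightarrow(V_g,E_g,\pi_g,\chi_g)$: it re-inserts the stored subgraph, identifies its nodes with the original source sections through the occurrence mapping in $\pi_g$ and $\mathrm{src}$, and reattaches each port edge to the internal endpoint recorded by $\phi_\omega^{-1}$.

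\textbf{Step 1 (single occurrence).} I will show $\xi_\omega(\kappa_\omega(P))\cong P$ whenever $P$ contains all or none of $\psi_\omega(V_g)$. In the "none" case, no node of the occurrence lies in $P$, so both maps act as the identity on $P$. In the "all" case, I will partition the typed dependency, skill, and resource edges of $P$ into three classes: edges internal to the occurrence, boundary-crossing edges, and edges disjoint from it. Disjoint edges are untouched. Internal edges are restored from $E_g$ via $\psi_\omega$. Boundary edges are restored through $\phi_\omega$ with their relation type $\rho_e$ intact, because the ports are typed and interface stability fixes role and I/O signatures. Equivalence edges to prototypes are the one thing I do not claim to recover, since the rule stores occurrences rather than $\mathcal{E}_{eq}$ links. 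This is exactly why the statement holds only up to auxiliary prototype links. The all-or-none hypothesis is essential here: a partial containment would create port edges in $\kappa_\omega(P)$ whose internal endpoint is absent from $P$, so expansion would add nodes.

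\textbf{Step 2 (composition).} Pairwise non-conflict means the occurrences share no internal source node and require compatible port reconnections. Two consequences follow. First, the individual rewrites act on disjoint node sets. Second, an edge joining two occurrences is a boundary edge of both, and it is redirected port-to-port consistently. From these I will argue that the $\kappa_\omega$ commute, so $\kappa_\Omega$ is well defined as their composition, and likewise for $\xi_\Omega$. Then $\xi_\Omega\circ\kappa_\Omega$ factors, by induction on $|\Omega|$, into single-occurrence round trips. At each stage the all-or-none hypothesis still holds for the remaining occurrences, because earlier rewrites never touch their internal nodes. The main obstacle I expect is this step, specifically edges between two macro occurrences. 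After rewriting, such an edge runs macro-to-macro through two port maps. I must check that expanding either macro first yields the same edge. For this I plan to use the "compatible port reconnections" clause of the conflict definition as the exact condition guaranteeing that $\phi_\omega$ and $\phi_{\omega'}$ agree on the shared edge.

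\textbf{Step 3 (preservation).} The resulting isomorphism is the identity on source occurrences, since nodes are matched through $\mathrm{src}$, and it preserves edge types. It therefore maps $E_P\cap\mathcal{E}_{dep}$ onto itself, so typed external dependencies are preserved. Directed paths are also preserved, so every operation-to-verifier path $u\leadsto z$ in $P$ persists after the round trip, and verifier reachability is retained. I will also note, as a remark, that reachability through the compressed form $\kappa_\Omega(P)$ itself follows from the accepted contract's verification condition. The proposition, however, needs only the round-trip statement.
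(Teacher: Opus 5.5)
Your proposal matches the paper's proof sketch step for step: a single-occurrence identification-and-expansion argument (internal edges restored from the stored occurrence, crossing edges reattached through $\phi_\omega$), then composition via pairwise non-conflict and induction on $|\Omega|$, with dependency and verifier-path preservation read off the resulting isomorphism. The one ingredient the paper invokes explicitly that you leave implicit is the dependency-closure acceptance check, which guarantees that $\phi_\omega$ records every boundary-crossing edge; without it, your definition of $\kappa_\omega$ as redirecting all such edges through ports would not be justified, since an unrecorded crossing edge could be silently dropped.
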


\begin{proof}[Proof sketch]
Consider first a single occurrence $\omega$. \emph{(i)~Identification.} The occurrence map $\pi_g$ identifies its source nodes, and $\phi_\omega$
maps every boundary-crossing edge to a typed macro port; dependency closure
guarantees that no crossing edge is left unrecorded. \emph{(ii)~Expansion.}
$\xi$ therefore restores exactly original internal nodes and edges and
reconnects every external edge through its recorded port, so each required verifier path is either untouched or recovered from the stored occurrence. \emph{(iii)~Composition.} Because MotifZip rejects
conflicting occurrences, rewriting one occurrence alters neither the internal
nodes nor the port map of any other; the single-occurrence argument thus applies independently to each element of
$\Omega$, and induction over $|\Omega|$ yields the stated isomorphism.
\end{proof}

This lifting property links compression to execution: MotifZip rewrites only
regions whose recorded source structure can be recovered compositionally, and
PathHydrate relies on this guarantee when retrieving from the compressed graph.
The guarantee is structural rather than semantic. It recovers the recorded
interfaces, dependencies, verifier paths, and source provenance, but does not
establish the equivalence of unrecorded behavior or the correctness of a
verifier itself. Section~\ref{sec:exp} and
Appendix~\ref{appendix:contract_results} then evaluate how reliably these
recorded contracts are preserved in practice.

\vspace{-2mm}
\subsection{PathHydrate: Budgeted Executable Context}
\label{sec:method:pathhydrate}
\vspace{-1mm}

MotifZip produces a compact library, but a task still needs an execution context
that is both small and complete. PathHydrate solves this online problem. It
maps the query to procedural anchors, concentrates section seeds without losing
skill-level coherence, searches for a compact connected subgraph, and then
renders macros only to the detail level required by the task. 
The pseudocode is shown in Appendix~\ref{appendix:alg:pathhydrate}.

\myparagraph{Query-guided seed construction}
This stage translates a natural-language task into graph anchors and then uses
both section-level and skill-level evidence to build a coherent seed set.

\myparagraphunderline{Task anchoring}
PathHydrate first converts the raw query into a structured task object:
\begin{equation}
z_q=(g_q,O_q,\Gamma_q,I_q,D_q,\{d_i\}),  
\end{equation}
compactly written as
$z_q=\textsf{AnalyzeTask}(q,p)$. Here $g_q$ is the goal, $O_q$ are expected
outputs, $\Gamma_q$ are required capabilities, $I_q$ are visible inputs, $D_q$
is a domain or executor profile, and $\{d_i\}$ is an ordered list of subgoals.
These anchors translate task language into the same execution-role space used by
Sec2Graph.

\myparagraphunderline{Dual-level seed fusion}
The anchors are then mapped to section seeds through two complementary views.
Let $\mathbf{e}(\cdot)$ denote the embedding function used for dense matching.
Each section node $v$ is scored by the better match under the subgoal and  raw-query views:
\begin{equation}
s(v,d_i,q)=
\max\{\cos(\mathbf{e}(d_i),\mathbf{e}(v)),\cos(\mathbf{e}(q),\mathbf{e}(v))\}.
\label{eq:seed_score}
\end{equation}
The subgoal view emphasizes capability and output-signature matching, while the
raw-query view preserves names, resources, and entities that may be omitted by
the abstraction. Because section-level retrieval can scatter seeds across
unrelated skills, PathHydrate also builds two skill-level rankings: one from
skill descriptions and one from the best section match inside each skill. For a
candidate skill $\sigma$, the rankings are fused by reciprocal-rank fusion:
\begin{equation}
\mathrm{RRF}(\sigma)=\sum_{r\in\mathcal{R}_{rank}}
\frac{1}{k+\mathrm{rank}_{r}(\sigma)},
\label{eq:rrf}
\end{equation}
where $\mathcal{R}_{rank}=\{R_{\mathrm{doc}},R_{\mathrm{node}}\}$ contains
skill-description and node-max rankings, respectively;
$\mathrm{rank}_{r}(\sigma)$ is the one-based position of $\sigma$ in ranking
$r$, and is set to $+\infty$ when $\sigma$ is absent; and $k$ is smoothing
constant.
The fused skill set concentrates the seed pool while still allowing
high-confidence section-level rescue. This step connects fine-grained section
precision with package-level coherence.

\myparagraph{Topology-aware context compilation}
Once the seed pool is coherent, PathHydrate must turn it into executable
context. It first searches for a compact connected subgraph, then repairs the
role scaffold that execution requires.

\myparagraphunderline{Constrained subgraph search}
Given the fused seeds, PathHydrate searches for a compact graph object that can
serve as executable context. The ideal objective is:
\begin{equation}
\begin{aligned}
P_q^\star
=\operatorname*{arg\,min}_{P\subseteq\mathcal{G}_{zip}}\quad
\eta T(P)+\beta |P|+\gamma E(P)
-\delta\,\mathrm{Match}(P,q),
\end{aligned}
\label{eq:path_cost}
\end{equation}
subject to anchor coverage, dependency closure, verifier reachability, and the
budget $T(P)\leq B$. Here $T(P)$ is the estimated token cost of the selected
procedural payload at its current hydration levels before rendering. $|P|$ is the number of
hydrated sections or macros, $E(P)$ estimates future expansion cost, and
$\mathrm{Match}(P,q)$ measures coverage of task anchors and matched section
seeds. The nonnegative weights $\eta$, $\beta$, $\gamma$, and $\delta$ are
specific to the PathHydrate objective.
At runtime, PathHydrate adds low-cost connectors until the required roles are
closed or the next addition would exceed the procedural-content budget. Thus, search
remains bounded by the task budget without materializing the growing raw
library.

\myparagraphunderline{Scaffold repair and context filling}
A connected subgraph can still be incomplete for execution, so PathHydrate then
repairs the scaffold around selected operations: it walks backward to recover
\textsc{Input}, \textsc{Precondition}, and \textsc{Resource} sections, and walks
forward to recover \textsc{Failure}, \textsc{Verifier}, and \textsc{Output}
sections. Closure repair produces
$(P,\xi_q)=\textsf{RepairClosure}(P,\mathcal{G}_{zip},B)$, where $\xi_q$ records
restored or infeasible roles. If no verifier-reachable context fits the budget,
the system falls back to the smallest skill-level bundle covering the anchors.
Context filling is sufficiency-gated rather than budget-seeking: it terminates
once the task anchors are covered, required dependencies are closed, and a
verifier remains reachable. 
Thus, unused budget is not spent on additional procedural text.

\myparagraph{Progressive hydration and rendering}
The selected subgraph is still an internal graph object. PathHydrate turns it
into agent-readable context by choosing macro detail levels and rendering the
selected units as an executable context.

\myparagraphunderline{Macro-level hydration}
The selected subgraph may contain either source sections or macros. For each
selected macro, PathHydrate chooses the lowest sufficient hydration level: name,
contract, outline, or full source. If the macro contract does not expose a
needed input, guard condition, verifier, or source pointer, the macro is
expanded. This upgrades progressive disclosure from package level to graph
level: \method reveals the smallest executable view of a section graph instead
of loading an entire skill body.

\newif\ifMainResultsDeltas
\MainResultsDeltastrue
\newcommand{\MainResultBestRow}{\rowcolor{mainresultblue}}
\newcommand{\MainGain}[1]{%
  \ifMainResultsDeltas\,{\scriptsize\textcolor{skillgreen}{\ensuremath{\uparrow #1}}}\fi}
\newcommand{\MainLoss}[1]{%
  \ifMainResultsDeltas\,{\scriptsize\textcolor{warnred}{\ensuremath{\downarrow #1}}}\fi}

\begin{table*}[ht]
\centering
\caption{Main results on \textsc{SkillsBench} and \alfworld. R is task reward (\%) on \textsc{SkillsBench} or episode success rate (\%) on \alfworld. Arrows report point changes from Vector
Skills. The best comparable results are in \textbf{bold}.}
\label{tab:main_results}
\vspace{-3mm}
\small
\setlength{\tabcolsep}{2.6pt}
\renewcommand{\arraystretch}{0.96}
\resizebox{0.75\textwidth}{!}{
\begin{tabular}{llcccccccc}
\toprule
\multirow{2}{*}{\textbf{Backbone}} &
\multirow{2}{*}{\textbf{Method}} &
\multicolumn{4}{c}{\textbf{\textsc{SkillsBench} \cite{skillsbench}}} &
\multicolumn{4}{c}{\textbf{\alfworld} \cite{alfworld}} \\
\cmidrule(lr){3-6}\cmidrule(lr){7-10}
& & \textbf{R$\uparrow$} & \textbf{Ret@1$\uparrow$} & \textbf{Ret@5$\uparrow$} & \textbf{MRR$\uparrow$}
  & \textbf{R$\uparrow$} & \textbf{Ret@1$\uparrow$} & \textbf{Ret@5$\uparrow$} & \textbf{MRR$\uparrow$} \\
\midrule
\multirow{5}{*}{MiniMax-M2.7}
& Vanilla Skills & 17.2\MainGain{6.8} & -- & -- & --
& 47.1\MainLoss{3.6} & -- & -- & -- \\
& Vector Skills & 10.4 & 3.6 & 10.8 & 5.8
& 50.7 & 37.9 & 68.6 & 49.2 \\
& GoS \cite{graphofskills} & 18.7\MainGain{8.3} & 50.6\MainGain{47.0}
& 65.5\MainGain{54.7} & 57.3\MainGain{51.5}
& 54.3\MainGain{3.6} & 56.4\MainGain{18.5}
& 86.4\MainGain{17.8} & 67.9\MainGain{18.7} \\
& \textsc{SkillDAG} \cite{skilldag} & 27.3\MainGain{16.9} & 66.7\MainGain{63.1}
& 78.2\MainGain{67.4} & 71.3\MainGain{65.5}
& 67.1\MainGain{16.4} & 57.9\MainGain{20.0}
& 92.1\MainGain{23.5} & 71.1\MainGain{21.9} \\
\MainResultBestRow
& \textbf{\method} & \textbf{33.3}\MainGain{22.9}
& \textbf{73.6}\MainGain{70.0}
& \textbf{92.0}\MainGain{81.2} & \textbf{81.3}\MainGain{75.5}
& \textbf{79.3}\MainGain{28.6} & \textbf{85.7}\MainGain{47.8}
& \textbf{98.6}\MainGain{30.0} & \textbf{91.2}\MainGain{42.0} \\
\midrule
\multirow{5}{*}{gpt-5.2-codex}
& Vanilla Skills & 27.4\MainGain{5.9} & -- & -- & --
& 89.3\MainLoss{3.6} & -- & -- & -- \\
& Vector Skills & 21.5 & 3.6 & 10.8 & 5.8
& 92.9 & 37.9 & 68.6 & 49.2 \\
& GoS \cite{graphofskills} & 34.4\MainGain{12.9} & 50.6\MainGain{47.0}
& 65.5\MainGain{54.7} & 57.3\MainGain{51.5}
& 93.6\MainGain{0.7} & 56.4\MainGain{18.5}
& 86.4\MainGain{17.8} & 67.9\MainGain{18.7} \\
& \textsc{SkillDAG} \cite{skilldag} & 36.8\MainGain{15.3} & 70.1\MainGain{66.5}
& 75.9\MainGain{65.1} & 73.0\MainGain{67.2}
& 93.6\MainGain{0.7} & 60.4\MainGain{22.5}
& 85.1\MainGain{16.5} & 69.6\MainGain{20.4} \\
\MainResultBestRow
& \textbf{\method} & \textbf{43.0}\MainGain{21.5}
& \textbf{74.7}\MainGain{71.1}
& \textbf{88.5}\MainGain{77.7} & \textbf{81.0}\MainGain{75.2}
& \textbf{96.4}\MainGain{3.5} & \textbf{90.7}\MainGain{52.8}
& \textbf{99.3}\MainGain{30.7} & \textbf{95.0}\MainGain{45.8} \\
\bottomrule
\end{tabular}}
\vspace{-2mm}
\end{table*}

\myparagraphunderline{Execution-context rendering}
The final context is rendered as a compact executable context rather than a
concatenation of snippets, denoted as $C_q=\textsf{RenderContract}(U,p,B)$,
where $U$ denotes the selected source sections and hydrated macro views. The
rendered context includes intent, bound inputs, required preconditions,
hydrated operations, resources, guards, failure handlers, verifiers, and source
expansion pointers. For a spreadsheet task, PathHydrate may load a shared
tabular-ingest contract plus a pivot-total verifier while leaving chart-rendering
sections out of context. The hydration log $\mathcal{L}_q$ records selected
sections, macro levels, token cost, late expansions, verifier coverage, and
source pointers. These logs connect runtime behavior back to the structural
constraints enforced by MotifZip and provide the feedback used by ReZip.


\myparagraph{Closed-loop invariants} The runtime postconditions mirror the compression constraints. Sec2Graph exposes section roles and source pointers; MotifZip accepts a macro only after boundary, signature, dependency-closure, and verifier-reachability checks; and PathHydrate records anchor coverage, restored dependencies, verifier status, macro levels, and source expansions in $\mathcal{L}_q$. The same quantities are used by the structure-aware evaluation protocol, so the empirical metrics test whether compression remains executable rather than merely shorter.

\vspace{-2mm}
\subsection{ReZip: Incremental Library Maintenance}
\label{sec:method:rezip}
\vspace{-1mm}

ReZip closes the compression-execution loop as the library evolves. We model
each update as a transition over the compressed library:
\begin{equation}
\mathcal{Z}_t=
(\mathcal{G}_{zip}^{t},\mathcal{M}_t,\mathcal{B}_{res}^{t},\Sigma_t),
\qquad
\mathcal{Z}_{t+1}=\operatorname{ReZip}(\mathcal{Z}_t,u_t),
\label{eq:rezip_state}
\end{equation}
where $\mathcal{B}_{res}^{t}$ stores unmatched residual subgraphs,
$\Sigma_t$ stores macro-level execution evidence, and $u_t$ is either a new
skill or an execution trace. New skills provide evidence for reusable
structure, while traces reveal abstractions that require revision. The
pseudocode is shown in Appendix~\ref{appendix:alg:rezip}.

\myparagraph{New-skill assimilation}
Sec2Graph first converts an arriving skill into a procedural subgraph. ReZip
matches each connected region against existing macros using typed ports and the
interface, execution, and verification aspects of their contracts. A compatible
region reuses the macro while retaining an occurrence-specific source map;
unmatched regions remain explicit and enter $\mathcal{B}_{res}^{t}$. ReZip
promotes a residual motif $r$ only when:
\begin{equation}
\operatorname{supp}_t(r)\ge m,\qquad
\Delta(r)>0,\qquad
\mathrm{Valid}_{\chi}(r)=1,
\label{eq:rezip_promote}
\end{equation}
where $\operatorname{supp}_t(r)$ is the number of distinct source skills
supporting $r$ up to update step $t$, $m$ is the minimum cross-skill support
required for promotion,
$\Delta(r)$ is the MotifZip compression gain, and
$\mathrm{Valid}_{\chi}$ applies the same port, dependency-closure, and verifier
checks as offline compression. Thus, library growth reuses established macros
immediately but introduces a new abstraction only after repeated
contract-compatible evidence.
The promoted occurrences are then removed from the residual buffer.

\myparagraph{Execution-aware macro revision}
For each observed macro $M$, $\Sigma_t(M)$ records its uses, source expansions, verifier
failures, and downstream repair cost. ReZip summarizes these signals as
\begin{equation}
\rho_t(M)=
\lambda_e\frac{n_{\mathrm{exp}}(M)}{n_{\mathrm{use}}(M)}
+\lambda_v\frac{n_{\mathrm{fail}}(M)}{n_{\mathrm{use}}(M)}
+\lambda_d\frac{c_{\mathrm{repair}}(M)}{n_{\mathrm{use}}(M)}.
\label{eq:rezip_risk}
\end{equation}
The weights $\lambda_e$, $\lambda_v$, and $\lambda_d$ balance insufficient
macro detail, failed verification, and downstream recovery effort.
When $\rho_t(M)$ exceeds a risk threshold, ReZip performs controlled demotion:
it first raises the hydration level for the affected task profile; persistent
risk causes the macro to be split into narrower contract-compatible rules or
retired in favor of its source sections. Every
promoted macro therefore satisfies MotifZip contract checks, while every
revision retains a source-grounded expansion. Consequently,
$\mathcal{G}_{zip}^{t+1}$ preserves the executable-subgraph lifting invariant
used by PathHydrate.

\vspace{-2mm}
\section{Experiments}
\label{sec:exp}
\vspace{-1mm}

In this section, we evaluate \method on \textsc{SkillsBench} and \alfworld.
The detailed experimental settings, including baselines, evaluation metrics, and implementations, are provided in
Appendix~\ref{appendix:exp_details}.

\vspace{-2mm}

\subsection{Main Results}
\vspace{-1mm}

\myparagraphquestion{(RQ1) Does \method improve end-to-end task performance}
To evaluate the overall effectiveness of \method,
we compare \method with Vanilla Skills, Vector Skills, GoS, and
\textsc{SkillDAG}, covering full-skill loading, embedding-based retrieval, and
graph-based retrieval under the same benchmark and backbone settings. As shown
in Table~\ref{tab:main_results}, \method consistently achieves the best
end-to-end performance across both benchmarks and both backbone LLMs. With
MiniMax-M2.7, \method obtains a task reward of 33.3 on
\textsc{SkillsBench} and an episode success rate of 79.3 on \alfworld,
improving by 6.0 points and 12.2 points over
\textsc{SkillDAG}. When
using gpt-5.2-codex, \method further reaches 43.0 on \textsc{SkillsBench} and
96.4 on \alfworld, improving by 6.2 points and 
2.8 points over \textsc{SkillDAG}.
Notably, both GoS and \textsc{SkillDAG} already achieve 93.6\% success on
\alfworld with gpt-5.2-codex, leaving limited room for further improvement,
yet \method still raises the success rate to 96.4\%. One possible explanation
for the weaker Vector Skills results is that whole-skill embeddings are
affected by shared boilerplate and formatting content, making closely related
packages difficult to distinguish. This interpretation is consistent with the
higher similar-skill confusion observed for skill-level retrieval in
Appendix~\ref{appendix:scale_overlap}. Overall, these consistent gains across
different backbones and task formats demonstrate retrieving
execution-complete sections provides more useful procedural context than
loading or retrieving skills as indivisible packages. On the matched MiniMax-M2.7
\textsc{SkillsBench} runs, \method also reduces cumulative prompt processing by
47.0\% and average tool calls by 21.7\% relative to \textsc{SkillDAG}, while
reducing uncached prompt input by 18.7\% and end-to-end task time by
21.1\%. Appendix~\ref{exp:end-to-end-cost} provides the
full breakdown and distinguishes these trajectory-level counters from the
one-time hydrated skill context.

\myparagraphquestion{(RQ2) Does section-level retrieval improve retrieval quality}
To assess whether the end-task gains come from more precise procedural
exposure, we evaluate retrieval on gold-annotated queries and map each retrieved section back
to its source skill. As shown in Table~\ref{tab:main_results}, \method
consistently improves retrieval quality over \textsc{SkillDAG} across both
benchmarks and backbone LLMs. On \textsc{SkillsBench}, \method increases
Ret@1 from 66.7 to 73.6 and Ret@5 from 78.2 to 92.0 with MiniMax-M2.7, while
also raising MRR by 10.0. The same trend holds with gpt-5.2-codex,
where \method improves Ret@5 by 12.6 over \textsc{SkillDAG}. The gain
is more pronounced on \alfworld: Ret@1 increases by 27.8 with
MiniMax-M2.7 and by 30.3 with gpt-5.2-codex. Since Ret@5 is already
high for strong graph-based baselines, the MRR gains indicate that \method
moves the correct source closer to the top rather than simply recovering it
somewhere in the candidate set. Overall, these results suggest that
execution-complete sections reduce skill-level retrieval ambiguity by
separating shared routines from the operations that distinguish each skill.
This benefit becomes stronger as the library grows:
Appendix~\ref{appendix:scale_overlap} shows that the Ret@1 advantage over
\textsc{SkillDAG} widens from 6.2 at 200 skills to 23.3 at 100K.
At the largest scale, \method retains 65.1 Ret@1 with 248.3 ms online
retrieval and hydration latency. After section contracts are available, local
graph construction and MotifZip complete in 178 seconds for the corresponding
100K-skill library with 4.77M section nodes
(Appendix~\ref{exp:offline-construction}).

\vspace{-2mm}
\subsection{Compression and Structural Fidelity}
\vspace{-1mm}

\begin{table}[t]
\centering
\caption{Compression and structural-fidelity results on
\textsc{SkillsBench}. The complete SkillZip configuration is shaded.}
\label{tab:compression_fidelity}
\vspace{-4mm}
\setlength{\tabcolsep}{2.2pt}
\begin{adjustbox}{max width=0.8\columnwidth}
\begin{tabular}{lcccccc}
\toprule
\textbf{Representation} & \textbf{CR$\uparrow$} & \textbf{Tok$\downarrow$}
& \textbf{DPR$\uparrow$} & \textbf{VR$\uparrow$}
& \textbf{Recover.$\downarrow$} & \textbf{R$\uparrow$} \\
\midrule
Raw section graph        & 1.00$\times$ & 6,716 & 100.0 & 100.0 & 0.0  & 31.0 \\
Exact-text dedup.        & 1.43$\times$ & 4,697 & 98.6  & 98.1  & 5.2  & 31.2 \\
Text compression~\cite{llmlingua2}
                         & 3.46$\times$ & 1,941 & 65.0  & 60.0  & 45.0 & 25.5 \\
Generic graph grammar    & 2.91$\times$ & 2,308 & 93.4  & 90.8  & 22.7 & 29.4 \\
\method w/o checks       & 3.78$\times$ & 1,777 & 88.9 & 84.6 & 31.5 & 27.8 \\
\rowcolor{mainresultblue}
\textbf{\method}         & 3.46$\times$ & 1,941 & 99.2 & 98.7 & 14.8 & 33.3 \\
\bottomrule
\end{tabular}
\end{adjustbox}
\vspace{-2mm}
\end{table}

\myparagraphquestion{(RQ3) Does compression preserve executable structure}
We evaluate whether compression can reduce the rendered skill context while
preserving the dependencies and verification conditions needed for execution.
As shown in Table~\ref{tab:compression_fidelity}, exact-text deduplication
keeps most structural information, but only provides a limited compression
ratio of 1.43$\times$, since it can merge only surface-identical sections.
Text compression reaches the same rounded compression ratio and average
rendered context as \method, but it sharply reduces structural fidelity: DPR drops to 65.0,
VR drops to 60.0, and 45.0\% of queries require recovery from the original
sections. Its reward also falls to 25.5, showing that shorter context alone
does not guarantee executable context. Generic graph grammar preserves more
graph structure than text compression, but still loses dependency and verifier
information, leading to lower reward than the raw section graph. Similarly,
removing contract checks makes \method more compact, but the loss in DPR, VR,
and reward shows that unsafe merges can damage execution.
In contrast, \method achieves a 3.46$\times$ compression ratio while keeping
DPR and VR close to the raw graph, at 99.2 and 98.7, respectively. It reduces recovery from 45.0\% to 14.8\% relative to text compression. Relative to the raw section graph, SkillZip improves reward from 31.0 to 33.3 while retaining near-complete structural fidelity.
Compared with text compression, \method improves reward by 7.8. These
results suggest that the main
benefit of \method is not only compressing repeated procedures, but compressing
them in a way that keeps their interface, execution dependencies, and verifier
contracts recoverable.
Because this preservation depends on the extracted contracts, we additionally
evaluate contract extraction against human annotations and under controlled
field corruption in Appendix~\ref{appendix:contract_results}, obtaining 91.6 macro-F1 and 84.6 exact match and maintaining high DPR and VR under 10\%  contract corruption.

\vspace{-2mm}

\subsection{Ablation Study}
\label{exp:ablation}
\vspace{-1mm}


\begin{table}[t]
\centering
\caption{Component ablation of \method on \textsc{SkillsBench}.}
\label{tab:ablation_skillzip}
\vspace{-4mm}
\setlength{\tabcolsep}{2.0pt}
\begin{adjustbox}{max width=0.9\columnwidth}
\begin{tabular}{llccccc}
\toprule
\textbf{Component} & \textbf{Variant} & \textbf{R$\uparrow$} & \textbf{Ret@1$\uparrow$}
& \textbf{Tok$\downarrow$} & \textbf{DPR$\uparrow$} & \textbf{VR$\uparrow$} \\
\midrule
\rowcolor{mainresultblue}
\textbf{Full} & \textbf{\method} & 33.3 & 73.6 & 1,941 & 99.2 & 98.7 \\
\midrule
\textbf{Unit} & w/o section-level nodes & 27.9 & 66.7 & 3,103 & -- & -- \\
\midrule
\multirow{3}{*}{\textbf{Compression}}
& w/o MotifZip            & 31.0 & 71.8 & 2,967 & 100.0 & 100.0 \\
& w/o dependency closure  & 28.6 & 72.1 & 1,653 & 82.3 & 92.6 \\
& w/o verifier constraint & 29.1 & 72.8 & 1,668 & 94.9 & 76.4 \\
\midrule
\multirow{2}{*}{\textbf{Hydration}}
& w/o global section rescue & 30.4 & 68.2 & 1,812 & 96.5 & 96.8 \\
& w/o adaptive hydration    & 31.5 & 73.2 & 2,587 & 99.2 & 98.7 \\
\bottomrule
\end{tabular}
\end{adjustbox}
\vspace{-2mm}
\end{table}

\myparagraphquestion{(RQ4) Which components contribute to the final performance}
To identify the contribution of each component, we conduct ablation studies. As shown in Table~\ref{tab:ablation_skillzip},
replacing section-level nodes with skill-level nodes causes the largest drop:
Ret@1 decreases by 6.9 points, task reward drops by 5.4 points, and rendered context
increases by 59.9\%.
This drop occurs because section-level procedures are the input to later
stages. Without them, MotifZip can only compress coarser
patterns and PathHydrate must retrieve from less precise context, limiting both
retrieval precision and context efficiency.
The compression ablations reveal a different division of labor. Without
MotifZip, DPR and VR remain perfect, but context increases by 52.9\% and reward
falls to 31.0, showing that motif abstraction primarily removes
contract-compatible repeated structure. This variant still uses PathHydrate
and therefore renders 2,967 tokens rather than the 6,716 tokens of full,
unpruned raw-graph retrieval in Table~\ref{tab:compression_fidelity}. Removing
dependency closure or verifier constraints leaves Ret@1 near the full model,
but damages executable structure: DPR falls to 82.3 without closure, while VR
falls to 76.4 without verifier constraints. 
Thus, correct retrieval alone is insufficient without execution-preserving
compression.
The hydration ablations further show how PathHydrate controls the final
context. Removing global section rescue reduces Ret@1 to 68.2 and reward to
30.4, showing that local expansion from the initial skill neighborhood can miss
useful sections. Disabling adaptive hydration preserves DPR and VR, but uses
33.3\% more tokens. This context-saving effect is further supported by the
budget analysis in Appendix~\ref{appendix:budget_results}, where PathHydrate
renders 1,941 tokens per task, 72.1\% fewer than top-5 whole-skill loading.

To further evaluate \method, we report additional studies on compression and
contract robustness, retrieval scalability, and procedural overlap across
domains
(Appendix~\ref{appendix:compression_accounting}--\ref{appendix:overlap_applicability});
hydration quality, context compactness, and system cost
(Appendix~\ref{appendix:budget_results}--\ref{appendix:cost_results});
streaming maintenance, repeated-run reliability, backbone generalization, and
failure attribution
(Appendix~\ref{appendix:rezip_results}--\ref{appendix:failure_analysis});
and case studies (Appendix~\ref{appendix:case_studies}). A detailed
outline is shown in the \hyperref[outline]{Appendix Outline}.

\vspace{-3mm}

\section{Conclusion}
\label{sec:conclusion}
\vspace{-1mm}

In this paper, we present \method, a contract-preserving graph compression
framework for scalable agent skill libraries. \method organizes skills into
section-level procedural graphs, compresses repeated execution patterns, and
builds compact task-specific contexts while preserving dependency and verifier
contracts. Experiments on \textsc{SkillsBench} and \alfworld show that
\method outperforms strong retrieval and graph-based baselines in both
end-task performance and source-skill retrieval. Overall, \method improves
skill-library scalability while maintaining executable structure, enabling
efficient and reliable skill reuse in long-horizon agent tasks.

\bibliographystyle{ACM-Reference-Format}
\bibliography{custom}

\newpage
\appendix

\onecolumn
\section*{Appendix Outline}
\label{outline}
\hypersetup{hidelinks}
\newcommand{\OutlineEntry}[4]{%
  \noindent\hspace*{#1}%
  \parbox[t]{\dimexpr\linewidth-#1\relax}{%
    \hyperref[#2]{#3}\nobreak\dotfill\nobreak\pageref{#2}}%
  \par\vspace{#4}}
\newcommand{\OutlineItem}[3]{\OutlineEntry{#1}{#2}{#3}{2pt}}
\newcommand{\OutlineItemGap}[3]{\OutlineEntry{#1}{#2}{#3}{5pt}}
\newcommand{\OutlineTop}[3]{\OutlineEntry{#1}{#2}{\textbf{#3}}{2pt}}
\newcommand{\OutlineItemCompact}[3]{\OutlineEntry{#1}{#2}{#3}{1pt}}
\newcommand{\OutlineTopCompact}[3]{\OutlineEntry{#1}{#2}{\textbf{#3}}{1pt}}
\newcommand{\OutlineItemTight}[3]{\OutlineEntry{#1}{#2}{#3}{0pt}}
\newcommand{\OutlineTopTight}[3]{\OutlineEntry{#1}{#2}{\textbf{#3}}{0pt}}
\OutlineTopTight{0pt}{appendix:alg}{A.\,Algorithm}
\OutlineItemTight{1.5em}{appendix:alg:overview}{A.1 \method Workflow}
\OutlineItemTight{1.5em}{appendix:alg:sec2graph}{A.2 Sec2Graph}
\OutlineItemTight{1.5em}{appendix:alg:motifzip}{A.3 MotifZip}
\OutlineItemTight{1.5em}{appendix:alg:pathhydrate}{A.4 PathHydrate}
\OutlineItemTight{1.5em}{appendix:alg:rezip}{A.5 ReZip}
\OutlineItemTight{1.5em}{appendix:alg:evaluation}{A.6 Evaluation-facing Metrics}
\OutlineTop{0pt}{appendix:diagnostic_results}{B.\,Additional Experiments}
\OutlineItem{1.5em}{appendix:compression_accounting}{B.1 Compression Fidelity and Contract Robustness}
\OutlineItem{3em}{exp:compression-accounting}{\textbf{(RQ5)} Does compression reduce active storage while limiting downstream recovery?}
\OutlineItemGap{3em}{exp:contract-robustness}{\textbf{(RQ6)} How robust is contract extraction?}
\OutlineItem{1.5em}{appendix:scale_overlap}{B.2 Retrieval Scalability and Ambiguity}
\OutlineItemGap{3em}{exp:library-scale}{\textbf{(RQ7)} How does retrieval scale as the skill library grows?}
\OutlineItem{1.5em}{appendix:overlap_applicability}{B.3 Procedural Overlap and Domain Applicability}
\OutlineItem{3em}{exp:procedural-overlap}{\textbf{(RQ8)} How does procedural overlap affect compression?}
\OutlineItemGap{3em}{exp:cross-domain}{\textbf{(RQ9)} Does compression generalize across procedural domains?}
\OutlineItem{1.5em}{appendix:budget_results}{B.4 Hydration Quality and Context Compactness}
\OutlineItem{3em}{exp:hydration-budget}{\textbf{(RQ10)} How does PathHydrate balance task quality and context budget?}
\OutlineItemGap{3em}{exp:context-compactness}{\textbf{(RQ11)} Is hydrated context compact under the default budget?}
\OutlineItem{1.5em}{appendix:cost_results}{B.5 System Cost across the Skill Lifecycle}
\OutlineItem{3em}{exp:offline-construction}{\textbf{(RQ12)} How does offline structural construction scale?}
\OutlineItem{3em}{exp:cost-decomposition}{\textbf{(RQ13)} How costly are task-time retrieval and rendering?}
\OutlineItemGap{3em}{exp:end-to-end-cost}{\textbf{(RQ14)} How does \method affect end-to-end agent cost?}
\OutlineItem{1.5em}{appendix:rezip_results}{B.6 Streaming ReZip Maintenance}
\OutlineItemGap{3em}{exp:streaming-maintenance}{\textbf{(RQ15)} Can ReZip maintain an evolving skill library?}
\OutlineItem{1.5em}{appendix:statistical_results}{B.7 Reliability across Runs and Backbones}
\OutlineItem{3em}{exp:statistical-reliability}{\textbf{(RQ16)} Are gains stable across repeated agent runs?}
\OutlineItemGap{3em}{exp:cross-backbone}{\textbf{(RQ17)} Does \method generalize across LLM backbones?}
\OutlineItem{1.5em}{appendix:failure_analysis}{B.8 Failure Analysis}
\OutlineItemGap{3em}{exp:failure-attribution}{\textbf{(RQ18)} Where do the remaining failures originate?}
\OutlineTopTight{0pt}{appendix:case_studies}{C.\,Case Studies}
\OutlineItemTight{1.5em}{appendix:case_ambiguity}{Case 1: Resolving Skill-Level Ambiguity}
\OutlineItemTight{1.5em}{appendix:case_contract}{Case 2: Preserving Contracts during Compression}
\OutlineItemTight{1.5em}{appendix:case_rezip}{Case 3: Maintaining Compression under Library Evolution}
\OutlineTopCompact{0pt}{appendix:exp_details}{D.\,Experimental Details}
\OutlineTopCompact{0pt}{appendix:detailed_related_work}{E.\,Detailed Related Work}
\OutlineTopTight{0pt}{appendix:prompt}{F.\,Prompts}
\OutlineItemTight{1.5em}{prompt:section-contract}{Section Role and Contract Extraction}
\OutlineItemTight{1.5em}{prompt:signature-canonicalization}{Cross-Section Signature Canonicalization}
\OutlineItemTight{1.5em}{prompt:skillsbench-anchoring}{\textsc{SkillsBench} Task Anchoring}
\OutlineItemTight{1.5em}{prompt:alfworld-anchoring}{\alfworld Task Anchoring}
\OutlineItemTight{1.5em}{prompt:skillsbench-execution}{\textsc{SkillsBench} Execution-Context Interface}
\OutlineItemTight{1.5em}{prompt:alfworld-execution}{\alfworld Action and Source-Expansion Interface}

\clearpage
\twocolumn
\section{Algorithm}
\label{appendix:alg}

\subsection{\method Workflow}
\label{appendix:alg:overview}
We summarize the full \method pipeline in Algorithm~\ref{alg:skillzip_overview}.
The workflow first builds a raw section graph, then compresses repeated
procedural motifs, hydrates a task-specific execution context, and updates the
compressed library when new skills or execution traces arrive. Colored
badges indicate operation types: \SecTool{Blue} denotes section parsing and
section-graph construction, \ZipTool{Orange} denotes motif mining and
macro-compression, \HydTool{Purple} denotes task-time hydration and context
rendering, \UpdTool{Red} denotes incremental update and risk maintenance,
\ChkTool{Green} denotes contract checking and verifier-related validation, and
\EvalTool{Gray} denotes evaluation or logging operations.

\begin{algorithm}[H]
\small
\SetVline
\caption{\method Workflow}\label{alg:skillzip_overview}
\Input{Skill library $\mathcal{S}$, update stream $\mathcal{U}$, query $q$, profile $p$, budget $B$}
\Output{Hydrated context $C_q$, hydration log $\mathcal{L}_q$, compressed graph $\mathcal{G}_{zip}$}

\vspace{1mm}
\CmtState{\\$\mathcal{G}\gets\emptyset,\ \mathcal{M}\gets\emptyset,\ \mathcal{B}_{res}\gets\emptyset,\ \Sigma\gets\emptyset$}{\textbf{Stage 1: Sec2Graph: Building Procedural Subgraphs}}
\ForEach{skill package $s\in\mathcal{S}$}{
  \State{$h_s\gets \SecTool{Sec2Graph}(s)$}
  \State{$\mathcal{G}\gets \SecTool{MergeSkillGraph}(\mathcal{G},h_s)$}
}
\vspace{1mm}
\CmtState{\\$\mathcal{G}\gets \SecTool{LinkCanonicalPrototypes}(\mathcal{G})$}{\textbf{Phase: Cross-skill reuse}}
\vspace{1mm}
\CmtState{\\$(\mathcal{G}_{zip},\mathcal{M})\gets \ZipTool{MotifZip}(\mathcal{G})$}{\textbf{Stage 2: MotifZip: Contract-Preserving Compression}}
\vspace{1mm}
\CmtState{\\$(C_q,\mathcal{L}_q)\gets \HydTool{PathHydrate}(q,p,B,\mathcal{G}_{zip},\mathcal{M})$}{\textbf{Stage 3: PathHydrate: Budgeted Executable Context}}
\vspace{1mm}
\CmtState{\\$\mathcal{U}_{run}\gets \mathcal{U}$}{\textbf{Stage 4: ReZip: Incremental Library Maintenance}}
\State{$\mathcal{Z}\gets(\mathcal{G}_{zip},\mathcal{M},\mathcal{B}_{res},\Sigma)$}
\ForEach{new skill or execution trace $u\in\mathcal{U}_{run}$}{
  \State{$\mathcal{Z}\gets \UpdTool{ReZip}(\mathcal{Z},u)$}
}
\State{$(\mathcal{G}_{zip},\mathcal{M},\mathcal{B}_{res},\Sigma)\gets\mathcal{Z}$}
\State{\Return $C_q,\mathcal{L}_q,\mathcal{G}_{zip}$}
\end{algorithm}

\newpage
\subsection{Sec2Graph}
\label{appendix:alg:sec2graph}
Algorithm~\ref{alg:sec2graph} details how \SecTool{Sec2Graph} converts one
skill package into a typed procedural graph. The key idea is to keep all
execution roles explicit: each section carries a role, input/output signature,
resource reference, guard or verifier condition, and source pointer.

\begin{algorithm}[H]
\small
\SetVline
\caption{Sec2Graph}\label{alg:sec2graph}
\Input{Skill package $s$ with markdown, scripts, schemas, tests, and resources}
\Output{Skill procedural subgraph $h_s=(V_s,E_s,r_s,t_s)$}

\vspace{1mm}
\CmtState{\\$\mathcal{B}\gets \SecTool{SegmentSkill}(s)$; $V_s\gets\emptyset,\ E_s\gets\emptyset$}{\textbf{Phase: Section grounding}}
\ForEach{candidate span $b\in\mathcal{B}$}{
  \State{$\tau_b\gets \SecTool{InferRole}(b)$}
  \vspace{1mm}
  \CmtState{\\$(X_b,Y_b)\gets \SecTool{ExtractSignature}(b)$; $R_b\gets \SecTool{ExtractResources}(b,s)$; $G_b\gets \SecTool{ExtractGuardsVerifiers}(b)$}{\textbf{Phase: Contract extraction}}
  \State{$v_b\gets \langle \tau_b,b,X_b,Y_b,R_b,G_b,\mathrm{src}_b\rangle$}
  \State{$V_s\gets V_s\cup\{v_b\}$}
}
\vspace{1mm}
\CmtState{\\$E_s\gets E_s\cup \SecTool{WeakOrderEdges}(V_s)$}{\textbf{Phase: Procedural subgraph construction}}
\ForEach{operation node $v\in V_s$}{
  \State{$E_s\gets E_s\cup \SecTool{BindInputs}(v,V_s)$}
  \State{$E_s\gets E_s\cup \SecTool{AttachRequirements}(v,V_s)$}
  \State{$E_s\gets E_s\cup \SecTool{AttachVerifiers}(v,V_s)$}
  \State{$E_s\gets E_s\cup \SecTool{AttachRepairs}(v,V_s)$}
}
\State{$E_s\gets E_s\cup \SecTool{SkillMembership}(s,V_s)$}
\State{$(r_s,t_s)\gets \SecTool{SelectEndpoints}(V_s,E_s)$}
\State{\Return $(V_s,E_s,r_s,t_s)$}
\end{algorithm}

\SecTool{SegmentSkill} uses headings, lists, code blocks, tool references, test
files, and warning phrases to create high-recall section candidates.
\SecTool{AttachVerifiers} is deliberately conservative: an operation is linked
only to verifiers that can be reached through the skill order, explicit tests,
or matching output signatures.

\newpage
\subsection{MotifZip}
\label{appendix:alg:motifzip}
Algorithm~\ref{alg:motifzip} gives the contract-preserving compression
procedure used by \ZipTool{MotifZip}. The algorithm does not cluster text spans
directly. It first searches within compatible typed signatures, then accepts a
motif only if its boundary, contract, and verifier paths remain valid after
replacement.

\begin{algorithm}[H]
\small
\SetVline
\caption{MotifZip}\label{alg:motifzip}
\Input{Raw section graph $\mathcal{G}$, minimum support $m$, candidate budget $K$}
\Output{Compressed graph $\mathcal{G}_{zip}$, macro dictionary $\mathcal{M}$}

\vspace{1mm}
\CmtState{\\$\mathcal{C}\gets\emptyset,\ \mathcal{A}_{zip}\gets\emptyset,\ \mathcal{M}\gets\emptyset$}{\textbf{Phase: Interface-aware motif mining}}
\State{$\mathcal{B}\gets \ZipTool{BucketBySignature}(\mathcal{G})$}
\ForEach{bucket $B_i\in\mathcal{B}$}{
  \State{$\mathcal{C}\gets \mathcal{C}\cup \ZipTool{GrowMotifs}(B_i,\mathcal{G})$}
}
\State{$\mathcal{C}\gets$ top-$K$ candidates ranked by support and preliminary gain}
\vspace{1mm}
\CmtState{\\$\mathcal{A}_{zip}\gets\emptyset$}{\textbf{Phase: Contract-preserving grammar construction}}
\ForEach{candidate motif $g\in\mathcal{C}$}{
  \State{$\Omega_g\gets \ZipTool{FindOccurrences}(g,\mathcal{G})$}
  \State{\textbf{if} $|\Omega_g|<m$ \textbf{then continue}}
  \State{$(I_g,O_g,\chi_g)\gets \ZipTool{BuildContract}(g,\Omega_g)$}
  \State{\textbf{if} \ChkTool{BoundaryClear}$(I_g,O_g)=\mathrm{false}$ \textbf{then continue}}
  \State{\textbf{if} \ChkTool{SignatureStable}$(\chi_g,\Omega_g)=\mathrm{false}$ \textbf{then continue}}
  \State{\textbf{if} \ChkTool{DependencyClosed}$(g,\chi_g,\mathcal{G})=\mathrm{false}$ \textbf{then continue}}
  \State{\textbf{if} \ChkTool{VerifierReachable}$(g,\chi_g,\mathcal{G})=\mathrm{false}$ \textbf{then continue}}
  \State{$\Delta(g)\gets \ZipTool{CompressionGain}(g,\Omega_g,\chi_g)$}
  \State{\textbf{if} $\Delta(g)\le 0$ \textbf{then continue}}
  \State{$\mathcal{A}_{zip}\gets \mathcal{A}_{zip}\cup\{(g,\Omega_g,I_g,O_g,\chi_g,\Delta(g))\}$}
}
\vspace{1mm}
\CmtState{\\$\mathcal{A}_{zip}\gets$ motifs in descending $\Delta(g)$}{\textbf{Phase: Macro selection and representation}}
\vspace{1mm}
\CmtState{\\$\mathcal{G}_{zip}\gets\mathcal{G}$}{\textbf{Phase: Graph grammar rule}}
\ForEach{$(g,\Omega_g,I_g,O_g,\chi_g,\Delta(g))\in\mathcal{A}_{zip}$}{
  \If{$\ZipTool{NoConflict}(g,\mathcal{M})$}{
    \State{$M_g\gets \ZipTool{CreateMacro}(g,\Omega_g,I_g,O_g,\chi_g)$}
    \State{$\mathcal{G}_{zip}\gets \ZipTool{ReplaceOccurrences}(\mathcal{G}_{zip},\Omega_g,M_g)$}
    \State{$\mathcal{M}\gets \mathcal{M}\cup\{M_g\}$}
  }
}
\State{\Return $\mathcal{G}_{zip},\mathcal{M}$}
\end{algorithm}

\ChkTool{BoundaryClear} ensures that the macro exposes the same external inputs
and outputs as the source motif. \ChkTool{DependencyClosed} checks that every
cut dependency is represented by a macro port or remains internal to the macro.
\ChkTool{VerifierReachable} ensures that state-changing operations still have
an internal or downstream verifier after compression. These checks are what
make the macro a reversible procedural rewrite rather than a lossy summary.
Operationally, an \textsc{Operation} section is treated as state-changing when
its extracted contract declares an output write, a mutation of an external
resource or environment, or a persistent tool-side effect. If the effect
metadata are missing or ambiguous, MotifZip conservatively treats the operation
as state-changing and requires a reachable verifier before compression.
The $\mathrm{Risk}(g)$ term in \ZipTool{CompressionGain} is evaluated only
after these validity checks. It increases for otherwise compatible occurrences
with missing or underspecified guards and resources, incomplete I/O bindings,
or weak verifier support. Contradictory signatures or verifier
bindings are not traded against compression gain: \ChkTool{SignatureStable}
rejects them before scoring, so the corresponding source occurrences remain
explicit and available for source-level hydration.

\subsection{PathHydrate}
\label{appendix:alg:pathhydrate}
Algorithm~\ref{alg:pathhydrate} shows how \HydTool{PathHydrate} turns the
compressed graph into a task-specific context. The runtime implementation first
constructs task-aware anchors, maps them to section seeds, concentrates the
seeds with fused skill evidence, and then compiles a connected executable
context with scaffolds, pruning, filling, closure repair, and macro-level
rendering decisions.

\begin{algorithm}[H]
\small
\SetVline
\caption{PathHydrate}\label{alg:pathhydrate}
\Input{Query $q$, profile $p$, budget $B$, compressed graph $\mathcal{G}_{zip}$, macro dictionary $\mathcal{M}$}
\Output{Rendered execution context $C_q$ and hydration log $\mathcal{L}_q$}

\vspace{1mm}
\CmtState{\\$z_q\gets \HydTool{AnalyzeTask}(q,p)$}{\textbf{Phase: Task anchoring}}
\State{$\mathcal{Q}_q\gets \HydTool{BuildSeedQueries}(z_q,q)$}
\vspace{1mm}
\CmtState{\\$\mathcal{A}_q\gets\emptyset$}{\textbf{Phase: Dual-level seed fusion}}
\ForEach{retrieval query $r\in\mathcal{Q}_q$}{
  \State{$S_r(v)\gets \HydTool{DenseSectionScore}(r,q,v)$ for $v\in\mathcal{G}_{zip}$}
  \State{$\mathcal{A}_q\gets \mathcal{A}_q\cup \HydTool{AdaptiveTopSeeds}(S_r)$}
}
\State{$R_{doc}\gets \HydTool{RankSkillDocs}(z_q,q,\mathcal{G}_{zip})$}
\State{$R_{node}\gets \HydTool{RankSkillsByNodeMax}(\mathcal{A}_q,\mathcal{G}_{zip})$}
\State{$\mathcal{K}_q\gets \HydTool{RRFConcentrate}(R_{doc},R_{node})$}
\State{$\mathcal{A}_q\gets \HydTool{ConcentrateSeeds}(\mathcal{A}_q,\mathcal{K}_q)$}
\vspace{1mm}
\CmtState{\\$P\gets \HydTool{ConnectSeeds}(\mathcal{A}_q,\mathcal{G}_{zip},q)$}{\textbf{Phase: Constrained subgraph search}}
\vspace{1mm}
\CmtState{\\$P\gets \HydTool{AttachScaffold}(P,\mathcal{G}_{zip})$}{\textbf{Phase: Scaffold repair and context filling}}
\State{$P\gets \HydTool{AttachReachableVerifiers}(P,\mathcal{G}_{zip})$}
\State{$P\gets \HydTool{RescorePrune}(P,z_q,q,B)$}
\State{$P\gets \HydTool{RoleBudgetPrune}(P)$}
\State{$P\gets \HydTool{BreadthFill}(P,\mathcal{K}_q,\mathcal{G}_{zip},q,B)$}
\State{$P\gets \HydTool{DepthFill}(P,\mathcal{G}_{zip},q,B)$}
\State{$(P,\xi_q)\gets \ChkTool{RepairClosure}(P,\mathcal{G}_{zip},B)$}
\State{$P\gets \HydTool{EnforceBudget}(P,\xi_q,B)$}
\vspace{1mm}
\CmtState{\\$U\gets \HydTool{ChooseMacroRenderLevels}(P,\mathcal{M},\xi_q)$}{\textbf{Phase: Macro-level hydration}}
\vspace{1mm}
\CmtState{\\$C_q\gets \HydTool{RenderContract}(U,p,B)$}{\textbf{Phase: Execution-context rendering}}
\State{$\mathcal{L}_q\gets \HydTool{RecordHydration}(q,z_q,P,U,C_q,\mathcal{M},\xi_q)$}
\State{\Return $C_q,\mathcal{L}_q$}
\end{algorithm}

\HydTool{AttachScaffold} adds the same-skill input, precondition, resource,
failure, verifier, and output companions around selected operations.
\HydTool{BreadthFill} uses fused skill-level evidence to cover multi-skill
tasks, while \HydTool{DepthFill} uses remaining budget to add useful sections
from the dominant skill. Both filling procedures return early once the selected
context covers the task anchors, closes required dependencies, and keeps a
verifier reachable; they do not exhaust the remaining budget by default.
\ChkTool{RepairClosure} restores pruned dependencies or verifier hooks when
possible and records any infeasible role in $\xi_q$.
\HydTool{ChooseMacroRenderLevels} keeps the original progressive-hydration
interface: a macro can be rendered as name, contract, outline, or full source
when its contract is sufficient.

\subsection{ReZip}
\label{appendix:alg:rezip}
Algorithm~\ref{alg:rezip} maintains the compressed library under two update
signals. New skills are assimilated through existing macros before residual
motifs are considered for promotion; execution traces revise macros that
repeatedly require expansion or cause downstream failures.

\begin{algorithm}[H]
\small
\SetVline
\caption{ReZip}\label{alg:rezip}
\Input{Library state $\mathcal{Z}_t=(\mathcal{G}_{zip}^{t},\mathcal{M}_t,\mathcal{B}_{res}^{t},\Sigma_t)$, update event $u_t$}
\Output{Updated library state $\mathcal{Z}_{t+1}$}

\If{$u_t$ is a new skill package}{
  \vspace{1mm}
  \CmtState{\\$h_{u_t}\gets \SecTool{Sec2Graph}(u_t)$}{\textbf{Phase: New-skill assimilation}}
  \State{$\Omega\gets \UpdTool{MatchMacros}(h_{u_t},\mathcal{M}_t)$}
  \State{$(h'_{u_t},\mathcal{R}_{u_t})\gets \UpdTool{AssimilateSkill}(h_{u_t},\Omega)$}
  \State{$\mathcal{G}_{zip}^{t}\gets \UpdTool{InsertSkill}(\mathcal{G}_{zip}^{t},h'_{u_t})$}
  \State{$\mathcal{B}_{res}^{t}\gets\mathcal{B}_{res}^{t}\cup\mathcal{R}_{u_t}$}
  \ForEach{residual motif $r\in\mathcal{B}_{res}^{t}$}{
    \If{$\operatorname{supp}_t(r)\ge m$ \textbf{and} $\Delta(r)>0$ \textbf{and} $\mathrm{Valid}_{\chi}(r)$}{
      \State{$M_r\gets \ZipTool{CreateMacro}(r,\mathcal{B}_{res}^{t})$}
      \State{$\mathcal{M}_t\gets\mathcal{M}_t\cup\{M_r\}$}
      \State{$\mathcal{G}_{zip}^{t}\gets\ZipTool{ReplaceOccurrences}(\mathcal{G}_{zip}^{t},r,M_r)$}
      \State{$\mathcal{B}_{res}^{t}\gets\mathcal{B}_{res}^{t}\setminus\operatorname{Occ}_t(r)$}
    }
  }
}
\If{$u_t$ is an execution trace}{
  \vspace{1mm}
  \CmtState{\\$\Sigma_t\gets \UpdTool{UpdateMacroStats}(\Sigma_t,\mathcal{M}_t,u_t)$}{\textbf{Phase: Execution-aware macro revision}}
  \ForEach{macro $M\in\mathcal{M}_t$}{
    \State{$\rho_t(M)\gets \UpdTool{MacroRisk}(\Sigma_t(M))$}
    \If{$\rho_t(M)\ge\eta$}{
      \State{$(\mathcal{G}_{zip}^{t},\mathcal{M}_t,\Sigma_t)\gets
      \UpdTool{ReviseMacro}(M,\mathcal{G}_{zip}^{t},\mathcal{M}_t,\Sigma_t)$}
    }
  }
}
\State{$\mathcal{Z}_{t+1}\gets(\mathcal{G}_{zip}^{t},\mathcal{M}_t,\mathcal{B}_{res}^{t},\Sigma_t)$}
\State{\Return $\mathcal{Z}_{t+1}$}
\end{algorithm}

\ChkTool{ValidContract} applies the same port, dependency-closure, and verifier
tests as \ZipTool{MotifZip}. \UpdTool{ReviseMacro} increases hydration detail
before splitting or retiring a macro, retains its source expansion rule, and
resets the evidence associated with the revised rule. The incremental update
therefore preserves the same lifting invariant as offline compression.

\newpage
\subsection{Evaluation-facing Metrics}
\label{appendix:alg:evaluation}
Algorithm~\ref{alg:evaluate_skillzip} describes how the framework-level logs
are converted into the empirical quantities used in evaluation. The algorithm
is intentionally placed after the three main modules and ReZip: it does not add
a new retrieval mechanism, but checks whether the compressed representation
preserves the structural conditions claimed by \method.

\begin{algorithm}[H]
\small
\SetVline
\caption{EvaluateSkillZip}\label{alg:evaluate_skillzip}
\Input{Evaluation queries $\mathcal{Q}$, raw graph $\mathcal{G}$, compressed graph $\mathcal{G}_{zip}$, macro dictionary $\mathcal{M}$, executor $\mathsf{Exec}$, profile $p$, budget $B$}
\Output{Per-query evaluation log $\mathcal{L}_{eval}$ and aggregate report $\mathcal{R}$}

\vspace{1mm}
\CmtState{\\$\mathcal{L}_{eval}\gets\emptyset$}{\textbf{Phase: Initialize per-query evaluation log}}
\ForEach{query $q\in\mathcal{Q}$}{
  \vspace{1mm}
  \CmtState{\\$(C_q,\mathcal{L}_q)\gets \HydTool{PathHydrate}(q,p,B,\mathcal{G}_{zip},\mathcal{M})$}{\textbf{Phase: Execute paired contexts}}
  \State{$C_q^{full}\gets \EvalTool{RenderUncompressedContext}(q,\mathcal{G})$}
  \State{$(y_q,\tau_q)\gets \EvalTool{Execute}(\mathsf{Exec},q,C_q)$}
  \State{$(y_q^{full},\tau_q^{full})\gets \EvalTool{Execute}(\mathsf{Exec},q,C_q^{full})$}
  \vspace{1mm}
  \CmtState{\\$d_q\gets \ChkTool{EvalDepPreserve}(\mathcal{L}_q,\mathcal{G})$}{\textbf{Phase: Check structural preservation}}
  \State{$v_q\gets \ChkTool{EvalVerifierReach}(\mathcal{L}_q,\mathcal{G}_{zip})$}
  \State{$m_q\gets \EvalTool{MacroExpansionRequired}(\mathcal{L}_q)$}
  \State{$f_q\gets \EvalTool{FullSourceFallback}(\mathcal{L}_q)$}
  \State{$i_q\gets \EvalTool{DownstreamInflation}(\tau_q,\tau_q^{full})$}
  \State{$t_q\gets 1-\HydTool{TokenCost}(C_q)/\HydTool{TokenCost}(C_q^{full})$}
  \State{$\mathcal{L}_{eval}\gets \mathcal{L}_{eval}\cup\{(q,y_q,t_q,d_q,v_q,m_q,f_q,i_q)\}$}
}
\vspace{1mm}
\CmtState{\\$\mathcal{R}\gets \EvalTool{AggregateMetrics}(\mathcal{L}_{eval})$}{\textbf{Phase: Aggregate task and structural metrics}}
\State{\Return $\mathcal{L}_{eval},\mathcal{R}$}
\end{algorithm}

\EvalTool{AggregateMetrics} reports task success, token reduction, dependency
preservation, verifier reachability, macro expansion, full-source fallback,
and downstream execution inflation. The last five metrics are structural
counterparts of the
constraints enforced during \ZipTool{MotifZip} and \HydTool{PathHydrate}; they
make it possible to test whether a compact context remains executable rather
than merely shorter. Downstream inflation compares paired compressed and
uncompressed executions of the same task under the same executor seed. We use
$J(\tau)=n_{\mathrm{repair}}(\tau)+n_{\mathrm{tool}}(\tau)
+2n_{\mathrm{vfail}}(\tau)$ and report
$100\max\{0,J(\tau_q)-J(\tau_q^{full})\}/\max\{1,J(\tau_q^{full})\}$.
Macro expansion and full-source fallback are aggregated as query-level
indicators, with fallback treated as a strict subset of expansion.

\clearpage

\section{Additional Experiments}
\label{appendix:diagnostic_results}

\subsection{Compression Fidelity and Contract Robustness}
\label{appendix:compression_accounting}

\phantomsection\label{exp:compression-accounting}
\myparagraphquestion{(RQ5) Does compression reduce active storage while limiting downstream recovery}
We further audit the storage and execution cost of graph compression on the
same 1K-skill library. The goal is to distinguish useful compression from
compression that only makes the stored graph smaller by pushing missing
information to downstream recovery. Therefore, we jointly measure active
storage, retained source packages, fallback expansion, and downstream inflation
(DI). Active storage counts the graph representation
and macro dictionary used during retrieval and hydration, while the original
source packages remain available only for reversible expansion. All
representations reuse the same cached section contracts, so the comparison
isolates representation and compression rather than LLM extraction.
As shown in Table~\ref{tab:compression_accounting}, \method reduces active
storage from 18.6 MB to 5.4 MB, corresponding to a 71.0\% reduction, while the
112.4 MB source library remains unchanged for reversible expansion. The gain
therefore comes from a smaller active procedural representation rather than
discarding source knowledge. Exact-text deduplication only reduces active
storage to 13.0 MB, whereas generic graph grammar reaches 6.4 MB but raises
fallback and DI to 14.8\% and 14.6\%, respectively.

Contract validation prevents this storage gain from being repaid during
execution. Removing the checks saves only another 0.5 MB, yet increases
fallback from 7.2\% to 24.7\% and DI from 2.7\% to 23.4\%. Thus, \method
achieves most of the available storage reduction while keeping downstream
recovery low. Appendix~\ref{exp:offline-construction} separately measures the
local construction cost of obtaining this representation.

\begin{table}[H]
\centering
\caption{Active-storage compression and downstream recovery on the 1K-skill
\textsc{SkillsBench} library. Source MB is retained for reversible expansion;
DI is measured against raw-graph execution.}
\label{tab:compression_accounting}
\setlength{\tabcolsep}{2.4pt}
\begin{adjustbox}{max width=\columnwidth}
\begin{tabular}{lcccc}
\toprule
\textbf{Representation} & \textbf{Active MB$\downarrow$}
& \textbf{Source MB} & \textbf{Fallback (\%)$\downarrow$}
& \textbf{DI (\%)$\downarrow$} \\
\midrule
Raw section graph     & 18.6 & 112.4 & 0.0  & 0.0 \\
Exact-text dedup.     & 13.0 & 112.4 & 2.1  & 1.8 \\
Generic graph grammar & 6.4  & 112.4 & 14.8 & 14.6 \\
\method w/o checks    & \textbf{4.9} & 112.4 & 24.7 & 23.4 \\
\textbf{\method}      & 5.4  & 112.4 & \textbf{7.2} & \textbf{2.7} \\
\bottomrule
\end{tabular}
\end{adjustbox}
\end{table}

\newpage

\phantomsection
\label{appendix:contract_results}

\phantomsection\label{exp:contract-robustness}
\myparagraphquestion{(RQ6) How robust is contract extraction}
To evaluate whether contract extraction supports reliable procedural
abstraction, we measure both field-level extraction quality and downstream
robustness under synthetic contract corruption. We first compare the extracted
interface, execution, verification, and provenance fields against human
annotations. We then remove or replace contract fields at increasing rates to
trace how extraction errors propagate to structural fidelity, source recovery,
and task reward.
Figure~\ref{fig:contract_extraction} shows a macro-average F1 of 91.6 and exact
match of 84.6. Source pointers are easiest to recover, reaching 96.2 F1 and
93.5 exact match, while preconditions and guards are most difficult at
88.7/79.6 because they are often implicit rather than stated as explicit
constraints.
Table~\ref{tab:contract_corruption} connects these extraction errors to
execution. With 10\% corruption, reward decreases only from 33.3 to 31.6 and
DPR/VR remain 96.8/95.1. As corruption increases, PathHydrate conservatively
expands more macros and restores more source sections. At 40\%, expansion and
fallback reach 41.3\% and 34.7\%, while reward falls to 24.1 and VR to 77.2.
The gradual increase in source recovery limits mild errors, but the eventual
degradation confirms that contract quality remains a central requirement for
reliable procedural abstraction.

\begin{figure}[H]
\centering
\includegraphics[width=\columnwidth]{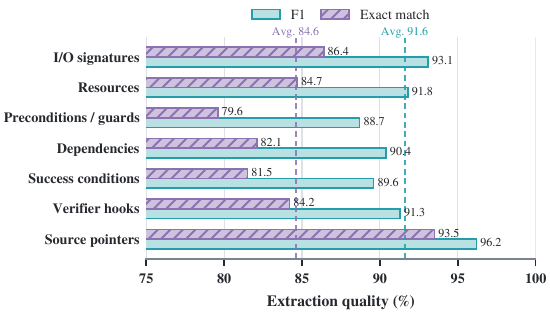}
\caption{Contract-extraction quality on the annotated subset. Bars show
field-level F1 and exact match; dashed lines denote macro averages.}
\label{fig:contract_extraction}
\end{figure}

\begin{table}[H]
\centering
\caption{Robustness under synthetic contract corruption.
Noise is the percentage of contract fields removed or replaced. Expansion and
fallback are query-level rates, with fallback requiring original-source
restoration.}
\label{tab:contract_corruption}
\setlength{\tabcolsep}{3.0pt}
\begin{adjustbox}{max width=0.9\columnwidth}
\begin{tabular}{cccccc}
\toprule
\textbf{Noise} & \textbf{R$\uparrow$} & \textbf{DPR$\uparrow$}
& \textbf{VR$\uparrow$} & \textbf{Expansion (\%)$\downarrow$}
& \textbf{Fallback (\%)$\downarrow$} \\
\midrule
0\%  & 33.3 & 99.2 & 98.7 & 14.8 & 7.2 \\
10\% & 31.6 & 96.8 & 95.1 & 19.7 & 11.8 \\
20\% & 29.2 & 92.1 & 89.4 & 27.6 & 18.9 \\
40\% & 24.1 & 81.5 & 77.2 & 41.3 & 34.7 \\
\bottomrule
\end{tabular}
\end{adjustbox}
\end{table}

\newpage

\subsection{Retrieval Scalability and Ambiguity}
\label{appendix:scale_overlap}

\phantomsection\label{exp:library-scale}
\myparagraphquestion{(RQ7) How does retrieval scale as the skill library grows}
To evaluate whether \method remains reliable as the skill library grows, we
keep the evaluation queries fixed and expand the candidate library from 200 to
100K skills. This setting separates the effect of library scale from query
difficulty: the tasks to be solved remain the same, while the retriever must
rank the correct procedural context among an increasingly large set of similar
or partially overlapping skills. We report Ret@1, similar-skill confusion,
compression ratio, and online retrieval--hydration latency.

As shown in Table~\ref{tab:scale_results}, \method degrades much more slowly
than \textsc{SkillDAG}. When the library grows from 200 to 100K skills,
\textsc{SkillDAG} Ret@1 drops from 72.1 to 41.8, a decrease of 30.3 points.
In contrast, \method Ret@1 drops from 78.3 to 65.1, a decrease of 13.2 points.
The performance gap therefore widens from 6.2 points at 200 skills to 23.3 points
at 100K skills. This indicates that the advantage of section-level
retrieval becomes larger when the library contains more distractor skills.

The confusion results explain this trend. As the library grows,
\textsc{SkillDAG} increasingly retrieves a topically related but incorrect
skill: its similar-skill confusion rate rises from 8.5\% to 48.2\%. In
comparison, \method keeps confusion much lower, increasing only from 4.2\% to
12.4\%. This suggests that whole-skill graph retrieval is more sensitive to
overlapping skill descriptions and shared high-level routines, whereas
section-level matching can distinguish the concrete operation, dependency, and
verifier needed by the query.

The scalability results also show that compression helps keep the active
retrieval structure compact. As the library grows, the compression ratio of
\method increases from 2.31$\times$ to 4.29$\times$, reflecting more repeated
procedural structure available for reuse. At the same time, online retrieval
and hydration latency grows smoothly from 18.4 ms to 248.3 ms per query and
remains below 250 ms even at 100K skills. Overall, these results show that
\method improves large-library retrieval in two ways: it reduces ambiguity by
ranking execution-relevant sections rather than whole skill packages, and it
keeps retrieval efficient by compressing repeated procedural structure into a
compact active graph.

\begin{table}[H]
\centering
\caption{Sensitivity to skill library size on \textsc{SkillsBench}. We keep
the evaluation queries fixed while expanding the candidate skill library from
200 to 100K skills. Conf. denotes the similar-skill confusion rate (\%) for
each method. Lat. denotes \method's online retrieval and hydration latency per
query.}
\label{tab:scale_results}
\setlength{\tabcolsep}{2.5pt}
\begin{adjustbox}{max width=\columnwidth}
\begin{tabular}{rcccccc}
\toprule
\textbf{Skills} & \multicolumn{2}{c}{\textbf{Ret@1$\uparrow$}}
& \multicolumn{2}{c}{\textbf{Conf.$\downarrow$}} & \textbf{CR$\uparrow$}
& \textbf{Lat. (ms)$\downarrow$} \\
\cmidrule(lr){2-3}\cmidrule(lr){4-5}
& \textsc{SkillDAG} & \method & \textsc{SkillDAG} & \method & \method & \method \\
\midrule
200    & 72.1 & \textbf{78.3} & 8.5  & \textbf{4.2}  & 2.31$\times$ & 18.4 \\
500    & 69.4 & \textbf{76.8} & 12.3 & \textbf{4.9}  & 2.87$\times$ & 21.7 \\
1K     & 66.7 & \textbf{73.6} & 15.8 & \textbf{5.8}  & 3.46$\times$ & 27.9 \\
2K     & 61.9 & \textbf{72.4} & 21.4 & \textbf{7.2}  & 3.82$\times$ & 35.8 \\
10K    & 52.4 & \textbf{68.9} & 34.5 & \textbf{9.8}  & 4.11$\times$ & 71.6 \\
100K   & 41.8 & \textbf{65.1} & 48.2 & \textbf{12.4} & 4.29$\times$ & 248.3 \\
\bottomrule
\end{tabular}
\end{adjustbox}
\end{table}

\newpage

\subsection{Procedural Overlap and Domain Applicability}
\label{appendix:overlap_applicability}

\phantomsection\label{exp:procedural-overlap}
\myparagraphquestion{(RQ8) How does procedural overlap affect compression}
To isolate the effect of procedural overlap, we keep the library size and query
distribution fixed, and vary only how often contract-compatible routines recur
across skills. This setting tests whether \method compresses more when reusable
structure is available, while avoiding unsafe compression when overlap is low.
Starting from the same 1K-skill pool, we construct three controlled variants by
injecting increasing numbers of recurring motifs into distractor-only skills.
Each injected occurrence preserves the source motif's role topology, typed I/O,
resource family, guards, and verifier connection, but remains an
occurrence-specific node sequence with independently paraphrased surface text.
For every injected occurrence, we replace a non-target routine with the same
role profile, keeping the number of skills, section count, role distribution,
evaluation queries, and target skills unchanged. Thus, the low, medium, and
high conditions differ in procedural recurrence rather than exact-text
duplication or task difficulty; Table~\ref{tab:overlap_results} reports the
resulting mean cross-skill support.

As shown in Table~\ref{tab:overlap_results}, higher overlap leads to stronger
compression. Mean macro support increases from 2.3 to 9.7 occurrences, and CR
increases from 1.18$\times$ to 4.63$\times$. At the same time, execution
fidelity remains stable: DPR stays above 99.1, VR stays above 98.8, and reward
changes by only 1.1 points across the three overlap levels. These results show
that \method adapts its compression level to the amount of reusable procedural
structure in the library. When few contract-compatible repetitions exist,
\method leaves more sections explicit rather than forcing aggressive macros;
when overlap is high, it can compress more while preserving dependency and
verifier structure.

\begin{table}[H]
\centering
\caption{Sensitivity to procedural overlap. Macro support is the mean number
of occurrence-specific source subgraphs represented by each active macro;
canonical prototypes are not counted.}
\label{tab:overlap_results}
\vspace{-2mm} 
\setlength{\tabcolsep}{3.0pt}
\begin{adjustbox}{max width=\columnwidth}
\begin{tabular}{lccccc}
\toprule
\textbf{Overlap} & \textbf{CR$\uparrow$} & \textbf{Support$\uparrow$}
& \textbf{DPR$\uparrow$} & \textbf{VR$\uparrow$} & \textbf{R$\uparrow$} \\
\midrule
Low    & 1.18$\times$ & 2.3 & 99.4 & 99.1 & 32.4 \\
Medium & 2.37$\times$ & 4.8 & 99.3 & 98.9 & 33.1 \\
High   & 4.63$\times$ & 9.7 & 99.1 & 98.8 & 33.5 \\
\bottomrule
\end{tabular}
\end{adjustbox}
\vspace{2mm}
\end{table}

\newpage
\phantomsection\label{exp:cross-domain}
\myparagraphquestion{(RQ9) Does compression generalize across procedural domains}
To test whether section-level compression is tied to a single procedural
domain, we partition the 1,000 \textsc{SkillsBench} skills into four named
domains and a residual miscellaneous category. We also evaluate \alfworld as a
separately structured embodied skill library. As shown in
Table~\ref{tab:domain_results}, \method improves the reported task metric over
\textsc{SkillDAG} in every domain where a matched baseline is available.
Reward gains range from
4.8 to 6.7 points across the four \textsc{SkillsBench} domains, while the
\alfworld success rate improves by 12.2 points. The miscellaneous category also
reaches 30.5 reward,
suggesting that \method does not depend on one dominant skill type.
The compression ratios differ across domains, which is expected because
different libraries contain different amounts of reusable structure. Data
wrangling obtains the highest CR at 4.21$\times$, likely because loading,
schema handling, normalization, and validation routines recur across many
skills. The smaller \alfworld library has a lower CR of 1.62$\times$, but still
achieves a large reward gain, showing that compression ratio and task
improvement are related but not identical: even modest compression can help
when it exposes the right executable context. Across all domains, VR remains
high, ranging from 98.5 to 99.4. Together with the controlled-overlap results,
these findings show that section abstraction transfers across procedural
settings, while the attainable compression ratio is mainly governed by how
much contract-compatible reuse exists in each library.

\begin{table}[H]
\centering
\vspace{2mm}
\caption{Results by procedural domain. The first five rows partition all
1,000 \textsc{SkillsBench} skills; \alfworld is an independently structured
embodied skill library. A dash indicates that a category-level baseline was
not available.}
\label{tab:domain_results}
\setlength{\tabcolsep}{2.3pt}
\begin{adjustbox}{max width=0.9\columnwidth}
\begin{tabular}{lccccc}
\toprule
\textbf{Domain} & \textbf{Skills} & \textbf{CR$\uparrow$}
& \multicolumn{2}{c}{\textbf{R$\uparrow$}} & \textbf{VR$\uparrow$} \\
\cmidrule(lr){4-5}
& & & \textsc{SkillDAG} & \method & \method \\
\midrule
Data wrangling        & 164 & 4.21$\times$ & 29.7 & \textbf{36.4} & 99.1 \\
Scientific computing  & 238 & 3.74$\times$ & 25.8 & \textbf{32.1} & 98.5 \\
Software/web          & 311 & 2.96$\times$ & 30.2 & \textbf{35.0} & 98.9 \\
Finance/economics     & 147 & 3.48$\times$ & 28.4 & \textbf{34.7} & 98.6 \\
Miscellaneous         & 140 & 2.50$\times$ & --   & 30.5          & 98.5 \\
\alfworld             & 37  & 1.62$\times$ & 67.1 & \textbf{79.3} & 99.4 \\
\bottomrule
\vspace{1mm}
\end{tabular}
\end{adjustbox}
\end{table}


\newpage
\subsection{Hydration Quality and Context Compactness}
\label{appendix:budget_results}

\phantomsection\label{exp:hydration-budget}
\myparagraphquestion{(RQ10) How does PathHydrate balance task quality and context budget}
To evaluate the sensitivity of PathHydrate to its context budget, we vary the
procedural-content selection budget from 1,000 to 5,000 tokens while keeping
the skill library, retriever, and executor fixed. This experiment isolates
whether additional context continues to improve execution, or whether
PathHydrate can identify a compact sufficient subgraph before exhausting the
available budget.

As shown in Figure~\ref{fig:budget_reward_curve}, reward rises sharply from
22.4 at 1,000 tokens to 31.5 at 2,000 tokens, as the hydrated context recovers
more execution-critical dependencies and verifier conditions. Increasing the
budget to the default 3,000 tokens further raises reward to 33.3, whereas
expanding it to 5,000 tokens improves reward by only another 0.8 points. The
flattening curve shows that performance does not depend on greedily filling
the executor context: most of the useful procedural structure is already
available at the default budget. RQ11 next examines how much of this budget is
actually rendered for individual tasks.

\begin{figure}[H]
\centering
\includegraphics[width=0.92\columnwidth]{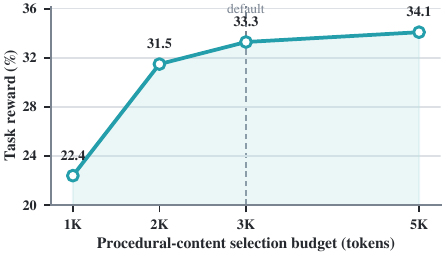}
\caption{Task reward of \method as the procedural-content selection budget
varies on \textsc{SkillsBench} with MiniMax-M2.7. The dashed line marks the
default 3,000-token budget.}
\label{fig:budget_reward_curve}
\end{figure}

\newpage
\phantomsection\label{exp:context-compactness}
\myparagraphquestion{(RQ11) Is hydrated context compact under the default budget}
To measure context compactness under the same default setting used in
Tables~\ref{tab:compression_fidelity} and~\ref{tab:ablation_skillzip}, we give
PathHydrate a 3,000-token procedural-content selection budget and measure how
much context it actually renders. We compare this delivered context with full
rendering of the same queries' top-$K$ retrieved skill packages.
As shown in Figure~\ref{fig:context_cost_soft_budget}, PathHydrate delivers
only 1,941 tokens per task on average, with a median of 1,947, even though the
budget allows up to 3,000 tokens. In contrast, rendering the top-$5$ retrieved
whole skills requires 6,958 tokens per task, and rendering the top-$12$ whole
skills requires 17,384 tokens. Thus, \method reduces delivered context by
72.1\% compared with top-$5$ whole-skill loading and by 88.8\% compared with
top-$12$ loading. The full model in Tables~\ref{tab:compression_fidelity}
and~\ref{tab:ablation_skillzip} obtains 33.3 reward with this same average
rendered context.

The task-level distribution in Figure~\ref{fig:context_token_distribution}
further shows that this saving is not an artifact of averaging. The modal
interval is 1,000--1,500 tokens, containing 26.4\% of tasks; 51.7\% of tasks
use fewer than 2,000 tokens, and 74.7\% use fewer than 2,500. These results rule
out a greedy fill-to-budget behavior. PathHydrate treats the budget as a
selection allowance rather than a quota: it terminates hydration once task
anchors are covered, dependencies are closed, and a verifier remains reachable,
and expands further only when one of these execution obligations is unresolved.
Together with the corresponding 33.3 reward, this result shows that compact
context comes from task-conditioned executable sufficiency rather than from
indiscriminate accumulation of procedural content.

\begin{figure}[t]
\centering
\includegraphics[width=\columnwidth]{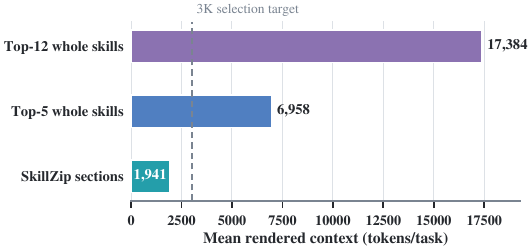}
\caption{Mean rendered context on the 1K-skill \textsc{SkillsBench} library.
Whole-skill comparisons render each retrieved package in full, whereas
\method hydrates dependency-closed sections.}
\label{fig:context_cost_soft_budget}
\vspace{1mm}
\end{figure}

\begin{figure}[t]
\centering
\includegraphics[width=0.9\columnwidth]{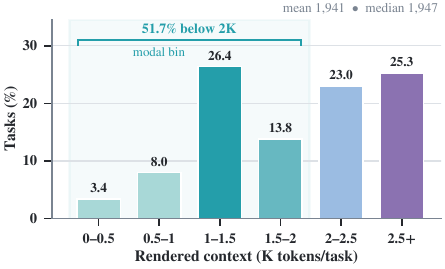}
\caption{Task-level distribution of the context rendered by \method.
Overall, 51.7\% of tasks use fewer than 2,000 tokens, and 1,000--1,500 tokens
is the modal interval.}
\label{fig:context_token_distribution}
\vspace{1mm}
\end{figure}



\subsection{System Cost across the Skill Lifecycle}
\label{appendix:cost_results}

\phantomsection\label{exp:offline-construction}

\myparagraphquestion{(RQ12) How does offline structural construction scale}
To understand the cost of maintaining \method as a persistent skill-library
representation, we separate the offline construction process into two parts.
The first part is LLM-assisted contract extraction, which reads skill packages
and produces typed section records. Its wall-clock cost depends on the chosen
model, provider, batching strategy, and request parallelism, and the extracted
records are cached after construction. Because this stage is provider-dependent,
we account for it separately and do not extrapolate its wall-clock time across
library scales. The second part is deterministic local processing, including
graph construction and MotifZip compression after the section records are
available. Table~\ref{tab:structural_build_scaling} isolates this local stage so
that the graph-algorithm cost can be measured independently of LLM inference.
As shown in Table~\ref{tab:structural_build_scaling}, the local structural
stage is lightweight for small and medium libraries: it takes 274 ms for 100
skills, 1.44 s for 1K skills, and 16.1 s for 10K skills. Even when the library
is expanded to 100K skills, corresponding to 4.77M cached section nodes, graph
construction and MotifZip finish in 178 s. This experiment measures structural
construction from cached records rather than fresh LLM extraction at 100K.
These wall-clock measurements use the single-server configuration reported in
Appendix~\ref{appendix:exp_details}.
All scales use the same cached-record format and local pipeline; the reported
wall-clock time includes fixed initialization and typed-bucket construction.
This scaling behavior is important
because MotifZip does not perform unrestricted matching over all possible
subgraphs. Instead, typed-signature bucketing in Algorithm~\ref{alg:motifzip}
groups sections by compatible roles, resource families, and I/O shapes before
motif growth. As a result, motif matching is mostly restricted to
interface-compatible candidates, and the local construction cost follows graph
size rather than exploding with arbitrary cross-library comparisons. Together
with the online latency results in Table~\ref{tab:scale_results}, these results
show that \method can be built and accessed as a persistent procedural memory
layer as the library grows.

\begin{table}[H]
\centering
\vspace{2mm}

\caption{Measured local structural construction cost after cached contract
extraction. Values report aggregate wall-clock time for graph construction and
MotifZip. LLM inference is excluded.}
\label{tab:structural_build_scaling}
\vspace{2mm}
\setlength{\tabcolsep}{5pt}
\begin{tabular}{r r r}
\toprule
\textbf{Skills} & \textbf{Section nodes} & \textbf{Graph + MotifZip$\downarrow$} \\
\midrule
100     & 6,441   & 274 ms \\
200     & 10,309  & 436 ms \\
500     & 24,333  & 1.06 s \\
1K      & 48,838  & 1.44 s \\
2K      & 96,739  & 2.90 s \\
5K      & 240,080 & 7.78 s \\
10K     & 477,681 & 16.1 s \\
100K    & 4.77 M  & 178 s \\
\bottomrule
\end{tabular}
\end{table}

\newpage
\phantomsection\label{exp:cost-decomposition}
\myparagraphquestion{(RQ13) How costly are task-time retrieval and rendering}
We next examine the local cost paid at task time. This microbenchmark starts
after task anchors are available and excludes provider-side task-analysis
inference. It isolates one retrieval-and-rendering step under the same default
setting used in our main results. The reported online latency includes
retrieval, graph closure, macro hydration, and rendering, while the rendered
tokens measure the procedural context finally delivered to the executor.
As shown in Table~\ref{tab:cost_decomposition}, \method reduces online access
latency from 41.2 ms to 27.9 ms compared with \textsc{SkillDAG}, while also
reducing rendered context from 3,103 to 1,941 tokens. This means that the
section-level compressed graph is not only more compact, but also cheaper to
query and render than the skill-level graph baseline. The rendered context of
\textsc{SkillDAG} matches that of the skill-level unit substitution in
Table~\ref{tab:ablation_skillzip}, confirming that both operate at the same
granularity. Vector Skills is faster locally, taking 12.1 ms, because it
performs dense retrieval only and does not run dependency closure or macro
hydration. However, it renders 2,834 tokens,
46.0\% more than \method, because the retrieved unit is still a whole-skill
semantic match rather than a dependency-closed procedural section. GoS and
\textsc{SkillDAG} also render more context than \method because their selected
units remain closer to coarse skill structures.

These results show the local trade-off clearly. \method is not simply the
fastest retriever in isolation, but its extra graph operations are modest and
are offset by a much smaller executable context. This matters for agent
systems because the rendered context is not used once: it is often carried
through multi-turn reasoning, tool calls, verification, and repair. Therefore,
a small increase over pure vector-retrieval latency can reduce the larger
downstream cost caused by exposing broad or noisy procedural content. Online
latency is dominated by retrieval and graph traversal rather than rendering
volume; the smaller compressed graph therefore keeps \method faster than
\textsc{SkillDAG} even after dependency closure and macro hydration.

\begin{table}[t]
\centering
\vspace{2mm}

\caption{Local retrieval and rendering cost on \textsc{SkillsBench} after task
anchors are available. Online excludes provider inference; Rendered Tok
includes fixed rendering metadata and any indivisible selected unit.}
\label{tab:cost_decomposition}
\vspace{-2mm}
\setlength{\tabcolsep}{5pt}
\begin{tabular}{lrr}
\toprule
\textbf{Method} & \textbf{Online (ms)$\downarrow$}
& \textbf{Rendered Tok$\downarrow$} \\
\midrule
Vector Skills       & \textbf{12.1} & 2,834 \\
GoS                 & 34.6 & 2,517 \\
\textsc{SkillDAG}   & 41.2 & 3,103 \\
\textbf{\method}    & 27.9 & \textbf{1,941} \\
\bottomrule
\end{tabular}
\vspace{6mm}
\end{table}

\phantomsection\label{exp:end-to-end-cost}
\myparagraphquestion{(RQ14) How does \method affect end-to-end agent cost}
Finally, we measure the full multi-turn agent trajectory, rather than only the
procedural context injected at retrieval time. For each task, we sum the
provider-reported prompt and completion tokens across all agent turns and then
average across tasks. This gives a system-level view of cost: a cleaner
procedural context may reduce not only the initial retrieval payload, but also
the number of turns, repeated prompt context, generated tokens, and tool calls
needed to complete the task. We also record wall-clock time from task launch to
harness termination, including model inference, tool and container execution,
and benchmark-harness interaction. This full-task measure is distinct from the
single-step retrieval and rendering latency reported in RQ13.
As shown in Table~\ref{tab:end_to_end_trajectory_cost}, \method improves reward
from 27.3 to 33.3 over \textsc{SkillDAG}, while substantially reducing the full
trajectory cost. Total prompt processing decreases from 2.78M to 1.47M tokens,
a 47.0\% reduction. Completion tokens decrease from 31,963 to 20,601, a
35.5\% reduction, and tool calls decrease from 36.9 to 28.9, a 21.7\%
reduction. Average task time likewise falls from 429.7 to 339.0 seconds, a
21.1\% reduction. Under the same setting, Vector Skills and GoS require 361.3
and 372.2 seconds per task, respectively, so \method remains the fastest of the
five evaluated systems. Compared with Vanilla Skills, \method also uses fewer total prompt
tokens, fewer completion tokens, and fewer tool calls while achieving a higher
reward. Thus, the cost reduction does not come from weakening the task or
shortening the response at the expense of quality; it is accompanied by better
task performance. Because full-task wall time includes provider and container
variation, we use it as end-to-end systems evidence rather than as an isolated
measure of retrieval speed.

The prompt-token results should be interpreted in two layers. The first layer
is direct context compactness: as shown in Table~\ref{tab:cost_decomposition},
\method delivers fewer procedural tokens at the retrieval step. The second
layer is trajectory shortening: cleaner section-level context reduces
irrelevant branches, missing dependencies, and unnecessary repair attempts, so
the agent repeats less accumulated context across turns and makes fewer tool
calls. This explains why the reduction in total prompt processing is larger
than the reduction in uncached prompt tokens alone. Most prompt tokens in all
systems are served from cache, so the uncached portion decreases more
moderately, from 76,880 under \textsc{SkillDAG} to 62,526 under \method, an
18.7\% reduction. The larger drop in cached and total prompt tokens reflects
that fewer turns and fewer repeated contexts are needed once the agent receives
a more compact executable procedure.

Overall, the three cost measurements support the same conclusion across the
skill lifecycle. Offline construction is a one-time and cacheable cost; local
task-time retrieval remains lightweight; and end-to-end execution becomes both
cheaper and more successful. This suggests that the main system benefit of
\method is not only reducing the number of tokens shown to the executor, but
also reducing the downstream interaction needed to use the skill library
correctly.

\begin{table}[t]
\centering
\caption{End-to-end trajectory cost on the default 1K-skill
\textsc{SkillsBench} setting with MiniMax-M2.7. Values are task averages.
Prompt counters aggregate all agent turns; task time spans model inference,
tool and container execution, and harness interaction.}
\label{tab:end_to_end_trajectory_cost}
\vspace{-2mm}
\setlength{\tabcolsep}{5pt}
\begin{tabular}{lrrr}
\toprule
\textbf{Metric} & \textbf{Vanilla Skills} & \textbf{\textsc{SkillDAG}}
& \textbf{\method} \\
\midrule
Total prompt$\downarrow$ & 2,429,237 & 2,782,696 & \textbf{1,473,532} \\
Uncached prompt$\downarrow$ & 78,081 & 76,880 & \textbf{62,526} \\
Cached prompt$\downarrow$ & 2,351,156 & 2,705,816 & \textbf{1,411,006} \\
Completion$\downarrow$ & 34,592 & 31,963 & \textbf{20,601} \\
Tool calls$\downarrow$ & 32.9 & 36.9 & \textbf{28.9} \\
Task time (s)$\downarrow$ & 464.7 & 429.7 & \textbf{339.0} \\
Reward$\uparrow$ & 17.2 & 27.3 & \textbf{33.3} \\
\bottomrule
\end{tabular}
\vspace{-2mm}
\end{table}

\subsection{Streaming ReZip Maintenance}
\label{appendix:rezip_results}

\phantomsection\label{exp:streaming-maintenance}

\myparagraphquestion{(RQ15) Can ReZip maintain an evolving skill library}
To evaluate whether ReZip can maintain compressed procedural memory as the
library evolves, we split the 1K-skill pool into an initial 50\% library and ten
equal, non-overlapping arrival batches containing the remaining 50\%. The split
and arrival order are fixed across all methods. After the fifth batch, we
introduce controlled contract drift by tightening the source-grounded verifier
condition of a fixed, domain-stratified subset of active macros while leaving
their interfaces and operations unchanged. This isolates whether maintenance
responds to changed execution obligations rather than to a different task.
After every batch, all methods are evaluated on the same frozen query set,
which covers both initial and arriving-skill procedures.

At each update, ReZip uses only cached section records and execution traces
observed up to that batch, so maintenance does not rely on future evidence and
the reported update cost excludes the one-time LLM extraction stage. We compare
it with three alternatives: keeping the original compressed graph fixed,
appending new sections without recompression, and periodically recompressing the
full library.
As shown in Table~\ref{tab:rezip_results}, the static graph degrades to 28.7
reward and 91.2 VR because old macros are not revised after verifier contracts
change. Append-only insertion partially improves reward to 30.1 and VR to
94.6, but its CR drops to 2.41$\times$ because newly recurring routines remain
as raw sections rather than being compressed. Full recompression gives the
strongest endpoint, reaching 3.71$\times$ CR, 33.6 reward, and 99.1 VR, but it
requires the full recomputation cost.
ReZip closely matches full recompression while using much lower update cost. It
achieves 3.64$\times$ CR, 33.2 reward, and 98.6 VR, with only 0.22$\times$ the
cost of periodic full recompression. It also responds to injected contract
drift within 1.4 batches on average. These results show that execution traces
provide useful maintenance signals: recurring contract-valid residuals can be
promoted into new macros, while verifier failures can demote or revise macros
whose contracts have become unsafe.

\begin{table}[H]
\centering
\vspace{2mm}
\caption{Streaming results after ten arriving-skill batches. Cost is
cumulative update cost normalized by periodic full recompression. Delay is the
mean number of batches required to recover from injected contract drift; it is
undefined for methods that do not actively repair drift.}
\label{tab:rezip_results}
\setlength{\tabcolsep}{2.3pt}
\begin{adjustbox}{max width=\columnwidth}
\begin{tabular}{lccccc}
\toprule
\textbf{Maintenance} & \textbf{CR$\uparrow$} & \textbf{Cost$\downarrow$}
& \textbf{R$\uparrow$} & \textbf{VR$\uparrow$}
& \textbf{Delay$\downarrow$} \\
\midrule
Static graph          & 3.07$\times$ & 0.00$\times$ & 28.7 & 91.2 & -- \\
Append-only sections  & 2.41$\times$ & 0.08$\times$ & 30.1 & 94.6 & -- \\
Full recompression    & \textbf{3.71$\times$} & 1.00$\times$ & \textbf{33.6} & \textbf{99.1} & \textbf{1.0} \\
\textbf{ReZip}        & 3.64$\times$ & \textbf{0.22$\times$} & 33.2 & 98.6 & 1.4 \\
\bottomrule
\end{tabular}
\end{adjustbox}
\end{table}

\newpage
\subsection{Reliability across Runs and Backbones}
\label{appendix:statistical_results}
\label{appendix:generalization_results}

\phantomsection\label{exp:statistical-reliability}
\myparagraphquestion{(RQ16) Are gains stable across repeated agent runs}
To evaluate whether the gains of \method are stable under stochastic agent
execution, we repeat each comparable method--backbone setting five times. Since
agent trajectories may vary across runs, we use a paired evaluation protocol:
for each run, \method and \textsc{SkillDAG} are paired by task or episode and
by random seed. We then compute confidence intervals by resampling paired
outcome-level differences rather than only resampling the five run-level means.
This makes the test focus on whether \method consistently improves the same
tasks or episodes, rather than whether one run happens to be favorable.

As shown in Table~\ref{tab:statistical_results}, the ordering in the main
results is preserved across all four settings. The paired task-performance
gains range from 2.8 to 12.2 points, and every 95\% confidence interval excludes
zero. On \textsc{SkillsBench}, \method improves reward over \textsc{SkillDAG}
by 6.0 points with MiniMax-M2.7 and by 6.2 points with gpt-5.2-codex, with
relatively tight confidence intervals. This indicates that the reward gains are not caused by a
small number of unstable tasks. On \alfworld, the gain is larger with
MiniMax-M2.7, where success rate improves by 12.2 points, although the interval is wider because
embodied action trajectories introduce more variation across episodes.

The most conservative setting is \alfworld with gpt-5.2-codex, where
\textsc{SkillDAG} already reaches 93.6 success and the remaining headroom is
small. Even there, \method still improves success to 96.4, with a paired gain
of 2.8 points and a significant paired permutation test ($p=.031$). These
results show that the benefit of executable section context persists across
repeated agent runs. The improvement is therefore not a single-trajectory
effect; it reflects a stable reduction in retrieval ambiguity and missing
execution context.

\begin{table}[H]
\centering
\caption{Repeated-run results. Values are mean$\pm$standard deviation over
five matched runs. $\Delta$ is the paired improvement of \method over
\textsc{SkillDAG} in the reported metric (reward or success rate), measured in
points. Confidence intervals use 1,000 paired bootstrap samples over
task or episode outcomes, and $p$-values use paired permutation tests over the
same outcome-level differences.}
\label{tab:statistical_results}
\setlength{\tabcolsep}{2.2pt}
\begin{adjustbox}{max width=\columnwidth}
\begin{tabular}{llrrrr}
\toprule
\textbf{Backbone} & \textbf{Benchmark} & \textbf{\textsc{SkillDAG}}
& \textbf{\method} & \textbf{$\Delta$ [95\% CI]} & \textbf{$p$} \\
\midrule
MiniMax-M2.7 & \textsc{SkillsBench} & 27.3$\pm$1.9 & 33.3$\pm$1.5 & +6.0 [3.8, 8.1] & .004 \\
MiniMax-M2.7 & \alfworld            & 67.1$\pm$3.1 & 79.3$\pm$2.2 & +12.2 [7.5, 16.8] & $<$.001 \\
gpt-5.2-codex & \textsc{SkillsBench} & 36.8$\pm$1.4 & 43.0$\pm$1.2 & +6.2 [4.4, 8.0] & .002 \\
gpt-5.2-codex & \alfworld            & 93.6$\pm$1.6 & 96.4$\pm$1.1 & +2.8 [0.4, 5.2] & .031 \\
\bottomrule
\end{tabular}
\end{adjustbox}
\vspace{4mm}
\end{table}

\newpage
\phantomsection\label{exp:cross-backbone}
\myparagraphquestion{(RQ17) Does \method generalize across LLM backbones}
To evaluate whether the benefit of \method depends on a particular executor,
we compare it with \textsc{Vector Skills} across six backbone LLMs and two
benchmarks. \textsc{Vector Skills} retrieves whole skills using dense semantic
similarity, while \method retrieves and hydrates execution-complete section
context. This comparison tests whether structural procedural retrieval remains
useful when the executor changes, and whether stronger LLMs can fully
compensate for the granularity mismatch of whole-skill retrieval.

As shown in Table~\ref{tab:vector_backbone_ablation}, \method outperforms
\textsc{Vector Skills} in all twelve benchmark--backbone settings. The gains
are especially large on \textsc{SkillsBench}, where tasks often require
precise tool use, file operations, or verifier-aware procedures. Across the six
backbones, \method improves reward by 19.9 to 26.2 points and achieves
2.0$\times$--3.2$\times$ relative gains over vector retrieval. The largest
relative gain appears with MiniMax-M2.7, where reward increases from 10.4 to
33.3. This suggests that smaller or less procedure-specialized executors
benefit strongly from receiving a compact and dependency-closed procedural
context, rather than a broad retrieved skill package.

The gains remain large for stronger backbones. On \textsc{SkillsBench},
Claude Sonnet 4.5 improves from 26.2 to 52.4, Gemini 3 Pro improves from 19.3
to 44.4, and gpt-5.2-codex improves from 21.5 to 43.0. These results show that
executor strength alone does not remove the need for the right retrieval unit:
even capable models can be hurt when the retrieved context contains unrelated
branches or lacks the exact verifier and dependency path required by the task.
By exposing the executable section context directly, \method reduces this
burden on the executor.

On \alfworld, the relative ratios are smaller because several vector baselines
already have high success rates. Nevertheless, \method improves every backbone.
The gains are largest for weaker backbones, with MiniMax-M2.7 increasing from
50.7 to 79.3 and Qwen 3.5 increasing from 72.9 to 87.4. For stronger models,
\method further pushes performance toward saturation, reaching 97.9 with Kimi
K2.5, 99.0 with Claude Sonnet 4.5, 99.1 with Gemini 3 Pro, and 96.4 with
gpt-5.2-codex. The pattern is consistent with the main claim: section-level
procedural context helps weaker executors by reducing retrieval ambiguity, and
still helps stronger executors by supplying compact, dependency-closed
procedures with the required verifier conditions.

Overall, the repeated-run and cross-backbone results support two reliability
claims. First, the gains of \method are statistically stable across repeated
agent executions. Second, the gains are not tied to one model family or one
executor strength level. \method acts as a procedural memory layer whose main
benefit comes from changing the retrieved unit from whole skills to executable
sections, making the context both more compact and more directly aligned with
the task.

\begin{table}[t]
\centering
\caption{Cross-backbone comparison with \textsc{Vector Skills}. Entries report
reward (\%) on \textsc{SkillsBench} and success rate (\%) on \alfworld.
$\uparrow$ is the \method/Vector ratio. The largest ratio per benchmark is
bolded, and the second largest is underlined.}
\label{tab:vector_backbone_ablation}
\setlength{\tabcolsep}{2.0pt}
\begin{adjustbox}{max width=\columnwidth}
\begin{tabular}{lcccccc}
\toprule
\multirow{2}{*}{\textbf{Backbone}}
& \multicolumn{3}{c}{\textbf{\textsc{SkillsBench}}}
& \multicolumn{3}{c}{\textbf{\alfworld}} \\
\cmidrule(lr){2-4}\cmidrule(lr){5-7}
& Vector & \method & $\uparrow$ & Vector & \method & $\uparrow$ \\
\midrule
MiniMax-M2.7      & 10.4 & 33.3 & \textbf{3.2$\times$} & 50.7 & 79.3 & \textbf{1.6$\times$} \\
Qwen 3.5          & 18.1 & 38.0 & 2.1$\times$          & 72.9 & 87.4 & \underline{1.2$\times$} \\
Kimi K2.5         & 22.4 & 47.0 & 2.1$\times$          & 89.0 & 97.9 & 1.1$\times$ \\
Claude Sonnet 4.5 & 26.2 & 52.4 & 2.0$\times$          & 94.3 & 99.0 & 1.0$\times$ \\
Gemini 3 Pro      & 19.3 & 44.4 & \underline{2.3$\times$} & 93.6 & 99.1 & 1.1$\times$ \\
gpt-5.2-codex     & 21.5 & 43.0 & 2.0$\times$          & 92.9 & 96.4 & 1.0$\times$ \\
\bottomrule
\end{tabular}
\end{adjustbox}
\end{table}

\begin{table*}[t]
\centering
\caption{Case Study 1 -- Retrieval trace for header normalization with row-count verification.
Skill-level retrieval exposes overlapping packages, whereas \method selects
the shared operation and its required verifier path.}
\label{tab:case_ambiguity}
\small
\setlength{\tabcolsep}{4.0pt}
\setlength{\fboxsep}{1.2pt}
\begin{tabularx}{\textwidth}{@{}
  >{\raggedright\arraybackslash}p{0.13\textwidth}
  >{\raggedright\arraybackslash}p{0.20\textwidth}
  >{\raggedright\arraybackslash}p{0.28\textwidth}
  >{\raggedright\arraybackslash}X@{}}
\toprule
\textbf{Retrieval unit} & \textbf{Matching evidence}
& \textbf{Selected context} & \textbf{Context consequence} \\
\midrule
\colorbox{purplemem!13}{\strut\textbf{\scriptsize Skill level}}
& Package-level similarity to both skill descriptions
& Full \emph{Clean CSV} and \emph{Pivot Table} packages
& Includes the requested routine, but also exposes missing-value repair, pivot
aggregation, and competing output rules. \\
\addlinespace[2pt]
\colorbox{kgblue!13}{\strut\textbf{\scriptsize Section level}}
& Operation anchor (\emph{normalize headers}) and verifier anchor
(\emph{row count unchanged})
& $M_{\mathrm{ingest}}$ with file/schema dependencies and the row-count
verifier
& Closes the required dependencies and verifier path without loading unrelated
downstream branches. \\
\bottomrule
\end{tabularx}
\end{table*}

\begin{table*}[t]
\centering
\caption{Case Study 2 -- Contract-aware comparison of three textually similar routines.
MotifZip accepts the two CSV occurrences because their interfaces, execution
requirements, and verifier boundaries agree, while keeping the workbook
occurrence separate.}
\label{tab:case_contract}
\small
\setlength{\tabcolsep}{4.0pt}
\begin{tabularx}{\textwidth}{@{}
  >{\raggedright\arraybackslash}p{0.13\textwidth}
  >{\raggedright\arraybackslash}p{0.19\textwidth}
  >{\raggedright\arraybackslash}p{0.25\textwidth}
  >{\raggedright\arraybackslash}p{0.20\textwidth}
  >{\raggedright\arraybackslash}X@{}}
\toprule
\textbf{Occurrence} &
\colorbox{kgblue!13}{\strut\textbf{\scriptsize Interface}} &
\colorbox{evidenceorange!15}{\strut\textbf{\scriptsize Execution}} &
\colorbox{skillgreen!15}{\strut\textbf{\scriptsize Verification}} &
\textbf{MotifZip decision} \\
\midrule
\emph{Clean CSV} &
\texttt{CSV} + delimiter $\rightarrow$ normalized table &
Infer delimiter, parse rows, normalize headers, and preserve row identity &
Schema report and reachable row-count hook &
\textcolor{skillgreen}{\textbf{Accept}} into
$M_{\mathrm{csv\mbox{-}ingest}}$ \\
\emph{Pivot Table} &
\texttt{CSV} + delimiter $\rightarrow$ normalized table &
Same ingest routine; pivot aggregation starts after the macro output port &
Schema report; downstream total verifier remains occurrence-specific &
\textcolor{skillgreen}{\textbf{Accept}} into
$M_{\mathrm{csv\mbox{-}ingest}}$ \\
\emph{Formula-Safe Workbook} &
\texttt{XLSX} $\rightarrow$ formula-preserving workbook &
Use a formula-aware resource and preserve formulas during normalization &
Formula-integrity and row-count verifiers &
\textcolor{warnred}{\textbf{Reject}} from the CSV macro; retain separately \\
\midrule
\multicolumn{5}{@{}l}{%
\colorbox{purplemem!13}{\strut\textbf{\scriptsize Reversible rewrite}}\enspace
$M_{\mathrm{csv\mbox{-}ingest}}$ keeps occurrence-specific source/port maps
and downstream verifiers.} \\
\bottomrule
\end{tabularx}
\end{table*}

\begin{figure}[t]
\centering
\includegraphics[width=\columnwidth]{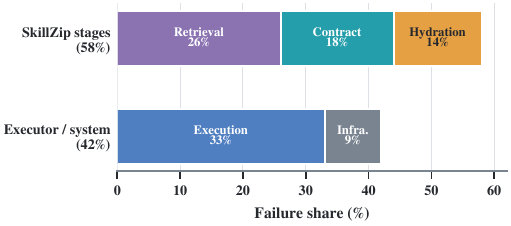}
\caption{Failure attribution grouped by the earliest blocking stage.}
\label{fig:failure_analysis}
\end{figure}

\subsection{Failure Analysis}
\label{appendix:failure_analysis}

\phantomsection\label{exp:failure-attribution}
\myparagraphquestion{(RQ18) Where do the remaining failures originate}
Under the default 1K-skill \textsc{SkillsBench} setting, we distinguish
procedural-context errors from downstream execution errors by assigning every
failed or repaired trajectory to its earliest blocking stage.
This attribution avoids counting an upstream retrieval miss again as a later
contract, hydration, or execution failure.

Figure~\ref{fig:failure_analysis} shows that stages controlled by \method account
for 58\% of the remaining failures. Retrieval is the largest source within
this group, contributing 26\%, mainly when closely related skills create
ambiguous anchors or when the needed section is not included in the initial
candidate set. Contract extraction accounts for 18\%, with errors concentrated
in implicit preconditions, resource requirements, and underspecified verifier
hooks. Hydration contributes the remaining 14\%, usually when a tight context
budget omits a required branch or when source expansion is triggered too late.

The other 42\% of failures occur after the relevant context has been retrieved
and hydrated. Agent execution is the largest single category, accounting for
33\% of cases: the executor may choose an invalid action order, misuse a tool,
or fail to recover after a verifier rejects an intermediate result.
Infrastructure failures account for 9\% and mainly include environment,
timeout, and tool-interface errors. This decomposition suggests that retrieval
ambiguity and implicit contracts are the main remaining targets for improving
\method itself, while executor and infrastructure failures require better
action control and more reliable execution environments. The smaller hydration
share also suggests that reversible source expansion usually limits structural
information loss, rather than merely shifting missing context to downstream
execution.

\newpage 
\section{Case Studies}
\label{appendix:case_studies}

To demonstrate how \method performs interpretable and execution-faithful
procedural retrieval, compression, and maintenance, we present three case
studies in Tables~\ref{tab:case_ambiguity}--\ref{tab:case_rezip}. Each case
study focuses on one key stage of the workflow: resolving retrieval ambiguity,
preserving executable contracts during compression, and updating compressed
procedures from execution evidence. Through examples involving CSV header
normalization, contract-aware CSV-ingest compression, and streaming macro
maintenance, we show how \method maps task requirements into section-level
anchors, contract-compatible motifs, dependency-closed hydrated context, and
ReZip update actions. These examples illustrate why \method can reduce active
context while keeping the procedural structure needed for verification,
execution, and source-grounded recovery. The cases are illustrative
constructions based on recurring benchmark patterns rather than additional
quantitative samples.

\vspace{3mm}
\phantomsection
\label{appendix:case_ambiguity}
\label{case:ambiguity}
\myparagraph{Case study 1: Resolving skill-level ambiguity}
Consider the task ``normalize the headers in a CSV file and verify that the
row count is unchanged.'' Both
\emph{Clean CSV} and \emph{Pivot Table} contain a file-loading and
header-normalization routine, so a skill-level retriever ranks both packages
highly. Table~\ref{tab:case_ambiguity} contrasts the resulting whole-skill
context with the section-level context compiled by \method.

Whole-skill retrieval exposes missing-value repair, pivot aggregation, and
competing output rules together with the requested routine. PathHydrate instead
maps the query to an \textsc{Operation} anchor (\emph{normalize headers}) and a
\textsc{Verifier} anchor (\emph{row count unchanged}). Section matching
recovers the shared tabular-ingest routine, while dependency closure adds the
required file input and schema requirements. The hydrated context therefore
keeps the reusable ingest macro and row-count verifier but excludes unrelated
continuations. This case illustrates that section-level retrieval reduces both
context size and ambiguity between topically similar skills.

\begin{table*}[t]
\centering
\vspace{1mm}
\caption{Case Study 3 -- Illustrative ReZip trace. Repeated compatible occurrences promote a
residual, while execution risk triggers controlled macro demotion.}
\label{tab:case_rezip}
\small
\setlength{\tabcolsep}{4.0pt}
\begin{tabularx}{\textwidth}{@{}
  >{\raggedright\arraybackslash}p{0.10\textwidth}
  >{\raggedright\arraybackslash}p{0.21\textwidth}
  >{\raggedright\arraybackslash}p{0.24\textwidth}
  >{\raggedright\arraybackslash}p{0.20\textwidth}
  >{\raggedright\arraybackslash}X@{}}
\toprule
\textbf{Stage} & \textbf{Incoming signal} & \textbf{Contract evidence}
  & \textbf{ReZip update} & \textbf{Resulting library state} \\
\midrule
\colorbox{kgblue!13}{\strut\textbf{\scriptsize Insert}}
  & A new \emph{Merge Monthly Reports} skill arrives
  & Ingest matches $M_{\mathrm{csv\mbox{-}ingest}}$; the
    align--merge--balance subgraph is unmatched
  & Reuse the ingest macro and buffer the residual
  & Known structure is compressed; novel steps remain explicit \\
\colorbox{skillgreen!15}{\strut\textbf{\scriptsize Promote}}
  & The merge residual recurs in later skills
  & Ports, dependencies, resource family, and balance verifier remain stable
  & Promote $M_{\mathrm{period\mbox{-}merge}}$
  & Later skills reuse one verified period-merge routine \\
\colorbox{warnred!13}{\strut\textbf{\scriptsize Revise}}
  & Formula-bearing tasks repeatedly expand or fail the generic export macro
  & Failures localize to the \texttt{XLSX} resource and formula-integrity
    verifier
  & Split by resource family; require full source for \texttt{XLSX}
  & CSV export remains compact; workbook safeguards are restored \\
\colorbox{purplemem!13}{\strut\textbf{\scriptsize Reuse}}
  & A future CSV or workbook query arrives
  & Task anchors identify the required resource and verification contract
  & Select the macro and hydration level by task contract
  & Compact where stable; source-expanded where evidence indicates risk \\
\bottomrule
\end{tabularx}
\end{table*}

\vspace{3mm}

\phantomsection
\label{appendix:case_contract}
\label{case:contract}
\myparagraph{Case study  2: Preserving contracts during compression}
Suppose three skills contain the near-identical instruction ``load the table,
normalize its headers, and validate the result.'' The occurrences belong to
\emph{Clean CSV}, \emph{Pivot Table}, and a \emph{Formula-Safe Workbook}
skill. Table~\ref{tab:case_contract} compares their interface, execution, and
verification contracts before compression.

Text deduplication would merge all three routines because their surface forms
are similar. MotifZip rewrites them only after checking typed ports,
dependencies, resources, and verifier paths. It therefore compresses the two
CSV occurrences into $M_{\mathrm{csv\mbox{-}ingest}}$ while retaining
occurrence-specific source maps and downstream verifier links; pivot
aggregation remains outside the macro and reconnects through its output port.
The workbook occurrence is rejected because formula preservation changes its
interface, execution resource, and verification contract. Thus, the graph
grammar accepts reuse only across a shared executable boundary and keeps every
accepted macro reversible to its source occurrences.

\newpage
\phantomsection
\label{appendix:case_rezip}
\label{case:rezip}
\myparagraph{Case study  3: Maintaining compression under library evolution}
This case follows how ReZip maintains compressed procedural memory as new
skills arrive and old abstractions become unsafe. A newly added
\emph{Merge Monthly Reports} skill reuses the existing CSV-ingest macro, but
also introduces an explicit \emph{align periods--merge--balance check}
residual. Separately, execution traces show that a generic export macro
repeatedly fails verifier checks on formula-bearing workbooks.
Table~\ref{tab:case_rezip} traces the resulting insert, promote, revise, and
reuse decisions.

ReZip first reuses the stable ingest macro and leaves the novel residual
explicit. Repeated contract-compatible occurrences then promote the residual
without full recompression. In the opposite direction, repeated expansion,
repair, or verifier failure localizes risk to the formula-bearing task family,
causing ReZip to split the export macro or raise its default hydration level.
Execution evidence therefore updates the abstraction itself: the library stays
compact where reuse is stable and source-expanded where verification indicates
risk. Across the three cases, Sec2Graph exposes sub-skill reuse, MotifZip
compresses contract-compatible occurrences, PathHydrate closes the required
execution context, and ReZip revises unsafe abstractions.

\section{Experimental Details}
\label{appendix:exp_details}

\myparagraph{Default setting and benchmarks}
We evaluate \method on two complementary agent benchmarks:
\textsc{SkillsBench} and \alfworld. \textsc{SkillsBench} tests whether
retrieved procedural knowledge helps an agent construct verifiable artifacts,
while \alfworld tests whether the same procedural-memory design supports
long-horizon interactive execution. Unless stated otherwise, the default
setting uses \textsc{SkillsBench} with the 1,000-skill library, MiniMax-M2.7,
and a 3,000-token procedural-content selection budget. The corresponding
default budget on \alfworld is 1,200 tokens. Experiments that vary the library
size, backbone, benchmark, context budget, procedural overlap, or update stream
state the changed factor explicitly and keep the remaining settings fixed.

\myparagraphunderline{\textsc{SkillsBench}}
\textsc{SkillsBench}~\cite{skillsbench} evaluates agent skills on
containerized artifact-construction tasks, including data processing, document
manipulation, software development, and related procedural domains. Each task
is paired with an executable verifier, so the final artifact is scored by
deterministic tests rather than by an LLM judge. The verifier source, excerpts,
assertions, and intermediate outcomes are withheld from task anchoring,
retrieval, hydration, and every evaluated agent; the verifier is invoked only
after execution to compute reward. Following
Graph-of-Skills~\cite{graphofskills} and
\textsc{SkillDAG}~\cite{skilldag}, we evaluate all 87 tasks under the
1,000-skill setting. Task reward (R) is the percentage of verifier tests
passed. Target source skills are used only for intrinsic retrieval evaluation.

\myparagraphunderline{\alfworld}
\alfworld~\cite{alfworld} aligns text-based interaction with embodied
household tasks derived from ALFRED~\cite{alfred} through the TextWorld
interface~\cite{textworld}. An agent must interpret a natural-language goal,
navigate rooms, manipulate objects, and complete the goal through admissible
actions. Following Graph-of-Skills~\cite{graphofskills} and
\textsc{SkillDAG}~\cite{skilldag}, we use the \texttt{valid\_seen} split and evaluate all 140 episodes. Task reward is the
percentage of episodes that reach the goal within the common step and attempt
budgets.
\vspace{2mm}

\myparagraph{Evaluation metrics}
We report end-task reward, intrinsic retrieval quality, structural fidelity,
recovery behavior, and system cost.

\myparagraphunderline{Task and retrieval quality}
For \textsc{SkillsBench}, reward (R) is the percentage of verifier tests
passed. For \alfworld, reward is the episode success rate. For intrinsic
retrieval, Ret@$k$ is the percentage of queries whose retrieved context maps to
a target source skill within the top $k$, and MRR is the mean reciprocal rank
of the first target source skill. Since \method retrieves sections and macros
rather than whole skills, we project each retrieved section to its owning
source skill and each macro occurrence to the source skill recorded in its
expansion map. We preserve the original retrieval order and remove repeated
source IDs by stable first occurrence, without additional reranking or score
aggregation. This gives whole-skill retrieval, section retrieval, and macro
retrieval the same Ret@$k$/MRR unit.

\myparagraphunderline{Compression and structural fidelity}
For compression analysis, CR is the ratio between the raw active
representation and the compressed active representation, where the compressed
representation includes the macro dictionary. Tok is the average number of
final rendered context tokens per query. The procedural-content selection
budget applies to selected procedural payloads before fixed task headers,
execution metadata, and source pointers are rendered. Whole-skill methods may
exceed the target budget when the smallest selected package is indivisible, so
we report measured final Tok rather than treating the target budget as the
observed context size.
DPR measures the fraction of required dependency relations retained in the
first compact view, and VR measures the fraction of required
operation-to-verifier paths that remain reachable. 
Expansion is the
percentage of queries for which at least one selected macro must be raised from
a compact name/contract view to an outline or full-source view. Fallback
is the percentage of queries that ultimately require an original source
section. For \method, fallback occurs after insufficient macro expansion and is
therefore a subset of expansion. Recover. is the broader query-level
rate used in Table~\ref{tab:compression_fidelity}: it records whether the
first compact view requires graph-local closure repair, macro expansion, or
source restoration before execution.

\myparagraphunderline{Downstream inflation}
We measure downstream execution inflation (DI) against a paired execution using
the corresponding raw section context. For trace $\tau$, let
$
J(\tau)=n_{\mathrm{repair}}(\tau)+n_{\mathrm{tool}}(\tau)
        +2n_{\mathrm{vfail}}(\tau),
$
where the terms count repair steps, downstream tool calls, and verifier
failures. For query $q$,
\[
\mathrm{DI}(q)=100\,
\frac{\max\{0,J(\tau_q)-J(\tau_q^{\mathrm{full}})\}}
     {\max\{1,J(\tau_q^{\mathrm{full}})\}}.
\]
We report the mean query-level percentage. DI separates compact contexts that
genuinely reduce work from contexts that only move missing information into
later repair.

\vspace{2mm}
\myparagraph{Baselines and budget alignment}
We compare \method with whole-skill disclosure, semantic retrieval,
skill-graph retrieval, and representation-level compression baselines.

\myparagraphunderline{Skill retrieval baselines}
\textbf{Vanilla Skills} exposes the available skill packages directly to a
ReAct-style agent~\cite{react}, providing a non-retrieval reference for task
quality and context cost. \textbf{Vector Skills} embeds each whole skill and
retrieves the top-ranked packages by dense semantic similarity. It tests
whether semantic retrieval alone can distinguish closely related skills.
\textbf{Graph-of-Skills (GoS)}~\cite{graphofskills} obtains semantic and
lexical seeds, diffuses relevance over a directed skill-dependency graph, and
hydrates a bounded bundle. It is the closest dependency-aware structural
retrieval baseline while still using skills as graph nodes.
\textbf{\textsc{SkillDAG}}~\cite{skilldag} represents inter-skill
dependencies, conflicts, specializations, and equivalences as typed edges, and
uses an agent-callable interface for vector matches, typed neighbors, and
conflict signals. We use it as the strongest task-time skill-graph baseline.

\myparagraphunderline{Disclosure and budget alignment}
We preserve each baseline's main disclosure policy rather than forcing all
methods to expose the same number of sections or packages. GoS uses five
initial seeds, returns at most eight skills, truncates each skill to 2,400
characters, and caps the rendered bundle at 12,000 characters.
\textsc{SkillDAG} uses top-$5$ retrieval with graph depth $2$ and allows the
agent to issue additional search or source-display calls on demand. Since
these policies do not map cleanly to a single token cap, we keep their
published retrieval settings and report measured rendered tokens and
end-to-end agent tokens.
All methods receive the same agent-visible task brief and benchmark metadata.
\method's structured task object is derived only from this shared input; no
method receives benchmark verifier code or an excerpt of its assertions.

\myparagraphunderline{Compression baselines}
For representation-level analysis, we compare \method with four compression
variants. \textbf{Exact-text section deduplication} merges normalized,
text-identical sections without checking occurrence-specific contracts.
\textbf{Text compression} applies LLMLingua-2-style task-agnostic token
compression~\cite{llmlingua2} after retrieving the same raw section context.
\textbf{Generic graph grammar} runs on the same raw section graph with the same
support threshold, candidate-window bound, macro cap, and greedy non-overlap
rewrite schedule as MotifZip. It identifies and accepts motifs using topology
and description-length gain without consulting role, resource, I/O, guard, or
verifier fields. \textbf{\method without contract checks} is a closer
controlled ablation: it retains MotifZip's signature-bucketed candidates,
scoring, and rewrite machinery, but removes the boundary, signature,
dependency, and verifier acceptance checks before admitting each positive-gain
motif.
For the text-compression baseline, we retrieve and render the same source
sections as the raw-graph representation, then apply LLMLingua-2-style
compression without MotifZip. We tune its compression threshold on development
queries to match \method's active-representation ratio and apply the same
default procedural-content selection budget at query time. The same
dependency and verifier validators are then applied to the first compressed
view. When either check fails, the corresponding original section is restored
before execution. DPR and VR are measured before restoration, while Recover.
reports the fraction of queries requiring restoration or repair. CR is measured
on the stored active representation before query-time restoration, whereas Tok
and reward are measured from the final context and execution after restoration.
Thus, CR and Tok describe different stages of the pipeline.

\vspace{2mm}

\myparagraph{Models and implementation}
We evaluate all methods under matched backbone, executor, and retrieval
settings within each comparison block.

\myparagraphunderline{Backbones and execution}
The main results use MiniMax-M2.7 (\texttt{MiniMax-M2.7}) and gpt-5.2-codex
(\texttt{gpt-5.2-codex}). The cross-backbone analysis additionally uses
Qwen 3.5 (\texttt{qwen3.5-plus}), Kimi K2.5 (\texttt{kimi-k2.5}), Claude
Sonnet 4.5 (\texttt{claude-sonnet-4-5-20250929}), and Gemini 3 Pro
(\texttt{gemini-3-pro-preview}).
On \textsc{SkillsBench}, all backbones are executed with the OpenHands agent
through the benchmark's BenchFlow Docker harness. Every method within a
backbone block receives the same task image, tools, system instructions,
two-attempt policy, and skill-library snapshot. On \alfworld, all backbones use
the same text-action runner. At each turn, the runner exposes the current
observation and admissible actions, and the model returns either one
environment action or a source-expansion request. Episodes are capped at 30
environment steps, 60 model turns, and two source expansions. The evaluated
backbone is also used for task anchoring; no auxiliary model is substituted for
decomposition or execution.

\myparagraphunderline{Retrieval, compression, and hydration settings}
All task-analysis, retrieval-control, and action-generation calls use
temperature $0$ and deterministic decoding whenever supported. Other sampling
controls remain at provider defaults. Structured task anchoring is capped at
8,000 output tokens, and each \alfworld action call at 4,096 output tokens.
\textsc{SkillsBench} otherwise follows the generation limits of its agent
harness.
We use BGE-M3 (\texttt{BAAI/bge-m3}) for dense retrieval, cap initial skill
recall at 12, and retain a lower-ranked skill only when its cosine similarity
is at least $0.45$ and at least $0.90$ of the highest score. The same encoder,
cached embeddings, and cutoff are used by all retrieval methods that require
dense similarity. For MotifZip, we scale the description-length saving and the auxiliary
$\mathrm{Reuse}$, $\mathrm{Cut}$, and $\mathrm{Risk}$ terms to comparable
ranges before scoring. We set $\alpha=0.5$, $\lambda=0.3$, and $\mu=0.2$;
because contract-invalid candidates are rejected by hard acceptance checks,
these coefficients rank only contract-valid motifs and prioritize cross-skill
reuse among them. For PathHydrate, we normalize token cost by the selection
budget and scale the remaining objective terms to comparable ranges. We use
$\eta=0.4$ and $\beta=\gamma=\delta=0.2$, emphasizing compactness among
subgraphs that already satisfy anchor coverage, dependency closure, and
verifier reachability.
\vspace{2mm}

\myparagraph{Reproducibility, scaling, and cost accounting}
We separate offline preprocessing, local graph construction, online retrieval,
and downstream execution cost.

\myparagraphunderline{Runtime and cost accounting}
\textsc{SkillsBench} is executed with BenchFlow 0.6.2 in isolated Docker
containers. Provider inference is remote and excluded from local graph-search
latency. Reported online graph latency measures retrieval, dependency closure,
and hydration after task anchors are available.
All local structural-construction timings are measured on a single server
equipped with an Intel Xeon Gold 6248R CPU and 512\,GB of memory. We report
elapsed wall-clock time under the same local implementation at every scale.
The timed graph-construction and MotifZip stage runs as a single process and is
not parallelized across skills or motif buckets. Model-side request parallelism
does not enter these measurements because they start from cached section
records and exclude LLM inference.

Offline construction has two stages. The first stage is LLM-assisted role and
contract extraction, which is cached after construction and depends on the
provider, batching strategy, and request parallelism. We therefore report it
separately from graph-algorithm scaling and do not extrapolate fresh extraction
time to the 100K setting. The second stage is local structural construction,
which starts from cached extracted records and includes procedural graph
construction and MotifZip. The corresponding 1K-skill local stage takes 1.44
seconds, while the 100K result reports the same cached-record starting point.
Each input document in this timing study is one skill package, so document and
skill counts are identical.

For end-to-end token accounting, \emph{total prompt} is the sum of
provider-reported prompt tokens over all model turns in a task. \emph{Uncached
prompt} is the subset not served from the provider cache, and \emph{cached
prompt} is the difference between total and uncached prompt tokens.
\emph{Completion} is the total number of generated tokens over the same turns.
These trajectory-level counters repeatedly include the growing interaction
history and are distinct from Tok, which measures the procedural context
rendered once for a query. We also report average tool calls from the saved
task traces and pair every method with the corresponding task reward.

\myparagraphunderline{Retrieval and library scaling}
For library-scaling experiments, we keep the evaluation queries and
target-source annotations fixed while enlarging only the candidate skill
library. Each larger library is a strict superset of the smaller one.
Additional skill packages are drawn without replacement from the same
SkillsBench-compatible source collection, deduplicated by stable package
identifier, and added in deterministic identifier order. We use the released
Graph-of-Skills pools at 200, 500, 1K, and 2K skills, and extend the same
nested-pool construction to 10K and 100K skills. The added packages are not
paired with evaluation queries and serve only as retrieval distractors. Across
all scales, the embedding backbone, retrieval budget, and evaluation protocol
remain unchanged, so Ret@1 and confusion changes reflect library growth rather
than query difficulty.

\myparagraphunderline{Repeated runs and statistical tests}
Statistical experiments use five matched runs, pairing methods by task or
episode and random seed. Confidence intervals and significance tests are
computed over paired outcome-level differences as described in
Appendix~\ref{appendix:statistical_results}.

\section{Detailed Related Work}
\label{appendix:detailed_related_work}

\myparagraph{Tool use, skill acquisition, and evaluation}
Tool-augmented agents interleave reasoning with environment actions through
prompting~\cite{react}, self-supervised tool invocation~\cite{toolformer}, or
large API corpora and benchmarks~\cite{apibank,toolllm,agentbench}. Beyond
individual calls, agents increasingly preserve successful behavior as reusable
procedural knowledge. Voyager~\cite{voyager} accumulates executable programs,
Reflexion~\cite{reflexion} and ExpeL~\cite{expel} distill feedback into reusable
experience, and Agent Workflow Memory~\cite{awm} retrieves workflows from prior
trajectories. SkillWeaver~\cite{skillweaver} discovers and hones reusable web
APIs through environment exploration, while SkillFoundry~\cite{skillfoundry}
extracts operational contracts from heterogeneous scientific resources and
iteratively validates, repairs, merges, or prunes the resulting skills. Agent
Skills~\cite{agentskills} formalizes a deployable package containing
instructions, scripts, references, and resources. A recent survey organizes
this area around skill representation, acquisition, retrieval, and
evolution~\cite{agentskillssurvey}; complementary ecosystem analysis reports
heavy-tailed lengths and substantial intent-level redundancy across more than
40,000 public skills~\cite{claudeskillsanalysis}. Recent infrastructures
further support large-scale creation and orchestration
\cite{agentskillos,skillnet}, while \textsc{SkillsBench}~\cite{skillsbench},
SkillRet~\cite{skillret}, SRA~\cite{sra}, SkillGenBench~\cite{skillgenbench},
and SWE-Skills-Bench~\cite{sweskillsbench} evaluate skill utility, retrieval,
generation, and compatibility. Together, these studies establish skills as
durable procedural memory, although the complete skill package usually remains
the unit exposed to retrieval and evaluation.

\myparagraph{Skill organization, retrieval, and execution}
Structured skill representations provide a complementary foundation. AIP
models a skill as a schema-validated directed execution graph with typed I/O
edges~\cite{aipgraph}, while the Scheduling--Structural--Logical
representation separates skill-level scheduling, scene-level execution
structure, and action/resource evidence extracted from textual
skills~\cite{sslskills}. These approaches make individual skill artifacts more
machine-readable for execution, discovery, or governance. \method instead
uses source-grounded contract-bearing sections as cross-skill compression
units and couples them to reversible macro rewriting and task-time hydration.
Progressive disclosure reduces initial context by loading skill metadata before
full bodies~\cite{agentskills}. Graph-of-Skills~\cite{graphofskills} and
Group-of-Skills~\cite{groupofskills} 
then model dependencies, groups, conflicts, or specializations to retrieve
structurally coherent skill bundles. SkillRAE~\cite{skillrae} compiles reusable
subunits, and SkillLens~\cite{skilllens} adapts retrieval across policy,
strategy, procedure, and primitive levels. Post-retrieval execution-graph
systems construct task-time DAGs after multiple whole skills have been
selected~(e.g., \textsc{SkillDAG}~\cite{skilldag} and GRASP \cite{grasp}), whereas SkillNet~\cite{skillnet} connects
skills through an ecosystem-level ontology. SkillGraph~\cite{skillgraph}
represents skills as nodes in an evolving directed graph, retrieves ordered
skill subgraphs, and updates prerequisite, enhancement, and co-occurrence
relations from trajectory feedback. SkillOps~\cite{skillops} instead associates
each skill with a typed contract and maintains a hierarchical ecosystem graph
using utility, compatibility, risk, and validation signals. Related graph-based
systems use topology for
multi-step evidence retrieval, memory, and routing
\cite{tan2025paths_over_graph, whengraphsrag,graphplanner}. These methods improve which
skills an agent selects and how selected skills are orchestrated. Their graph
units nevertheless remain whole skills or task-time invocations rather than
recurring section subgraphs compressed into reversible executable macros.

\myparagraph{Skill compression and procedural memory}
LLMLingua and its extensions~\cite{llmlingua,longllmlingua,llmlingua2} compress
prompt sequences, while SkillReducer~\cite{skillreducer}, SkillEE~\cite{skillee},
and SKIM~\cite{skim} reduce skill context through token reduction, cost-aware
rewriting, or compact multi-resolution representations. Experience Compression
Spectrum~\cite{ecs} instead relates memories, skills, and rules as levels of
experience abstraction. Skill-Pro~\cite{skillpro},
ReasoningBank~\cite{reasoningbank}, MEM1~\cite{mem1}, and
MemGen~\cite{memgen} learn or organize reusable memory from agent trajectories;
recent procedural-memory management further studies how such knowledge should
be controlled, adapted, and evaluated over time~\cite{after}. These approaches
shorten prompts, rewrite individual skills, or abstract successful experience.
Cross-skill compression additionally requires repeated procedures to remain
distinguishable by their execution interfaces, dependencies, and verification
conditions.

\myparagraph{Graph summarization and grammar-based compression}
Frequent-subgraph methods discover recurring structure through
description-length search or canonical enumeration
\cite{subdue1994,gspan2002,gaston2004}. Graph summarization represents large
graphs through attribute-aware grouping~\cite{snap2008},
structure-preserving summaries~\cite{grass2010}, or MDL-selected vocabularies
\cite{vog2014}. Scalable and lossless methods further optimize summary
construction, query preservation, and storage
\cite{sweg2019,ssumm2020,slugger2022}. Grammar-based graph
compression~\cite{graphgrammar2018} provides reversible replacement rules and
supports reachability or regular-path queries over compressed representations,
while MoSSo~\cite{mosso2020} studies incremental lossless maintenance.
Accordingly, motif discovery, MDL selection, reversible grammars, compressed
querying, and incremental summaries are established graph techniques. \method
adapts them to procedural skill libraries by validating every occurrence
against typed boundary ports, dependency closure, verifier reachability, and
source provenance before replacement, then hydrating the compressed graph
under a task budget.

\newpage
\section{Prompts}
\label{appendix:prompt}

This section summarizes the model-assisted prompts used by \method. They
correspond to two operations in the main pipeline. Sec2Graph first analyzes
source-grounded sections and then normalizes their cross-section data flow;
PathHydrate converts each task into the structured anchors used for seed
retrieval. The former is performed offline and cached with the library, while
the latter is cached per task. All calls use temperature zero. This generation
limit is separate from the procedural-context budget: section-level extraction
uses a 512-token output limit, per-skill normalization uses 2,048 tokens, and
task anchoring uses the configured structured-output limit.
The remaining modules do not introduce hidden prompt steps. MotifZip,
procedural graph construction, constrained subgraph search, closure repair,
macro rendering, and ReZip are deterministic graph operations. Dense retrieval
uses embeddings over the task anchors and section contents. For reproducibility,
we separately report the thin benchmark interfaces that deliver $C_q$ to the
task executor and allow targeted source expansion. These interfaces do not
change the procedural graph or its contracts.

\noindent\textbf{Notation.}
$s$ denotes a source skill, $b$ a source-grounded section, $q$ a task query,
$p$ a benchmark profile, and $C_q$ the hydrated executable context. Variables
enclosed by braces are instantiated at runtime.

\phantomsection
\label{prompt:section-contract}
\myparagraph{Section role and contract extraction}
This prompt implements the model-assisted analysis described in
Section~\ref{sec:method:sec2graph}. It receives one candidate section after
source-preserving segmentation and returns the role and local contract fields
of its section node. The SkillsBench profile describes technical software-agent
skills; the \alfworld profile describes navigation and household manipulation.
Profile-specific demonstrations using the same schema are prepended as
user--assistant turns.

\begin{tcolorbox}[
  title=Section Role and Contract Extraction Prompt Template,
  title after break=Section Role and Contract Extraction Prompt Template (continued),
  colback=purplemem!7,
  colframe=purplemem,
  boxrule=0.6pt,
  arc=1mm,
  breakable,
  fonttitle=\bfseries]
\small
\textbf{System role.}
You analyze one section of a procedural skill document for an AI agent.
The domain is \texttt{\{DomainProfile\}}. Extract the section's structured
execution semantics and return one JSON object without prose or markdown.

\textbf{Execution roles.}
Choose one or more roles from the following set and list the most specific role
first:

\texttt{intent}: the overall goal of the skill;\quad
\texttt{trigger}: when the skill should be invoked;\quad
\texttt{input}: values or parameters consumed by the procedure;\quad
\texttt{precondition}: world or environment conditions required before
execution;\quad
\texttt{operation}: an executable action, function call, algorithm step, or
code block;\quad
\texttt{resource}: a required tool, API, library, file, credential, or
device;\quad
\texttt{failure}: an error case or recovery condition;\quad
\texttt{verifier}: an observable check of successful completion;\quad
\texttt{output}: the resulting artifact or post-condition.

\textbf{Extraction rules.}
Use concise noun phrases for input and output signatures. Preconditions must
block execution when false; effects describe state changes rather than the
action itself; verifier hooks must be observable strings, return values, or
conditions. Do not infer unsupported fields. Use an empty list when a field is
absent.

\textbf{Return format (strict JSON):}

\texttt{\{}\\
\texttt{\ \ "roles": ["..."],}\\
\texttt{\ \ "input\_signature": ["..."],}\\
\texttt{\ \ "output\_signature": ["..."],}\\
\texttt{\ \ "preconditions": ["..."],}\\
\texttt{\ \ "effects": ["..."],}\\
\texttt{\ \ "verifier\_hooks": ["..."]}\\
\texttt{\}}

\textbf{Final user turn.}\\
\texttt{Skill: \{SkillName\}}\\
\texttt{Section:}\\
\texttt{\{SectionText\}}
\end{tcolorbox}

For \textsc{SkillsBench}, \texttt{\{DomainProfile\}} specifies programming,
scientific computing, data analysis, security, machine learning, visualization,
or another engineering domain. For \alfworld, it specifies the text-based
household environment and its navigation, pickup, placement, cleaning, heating,
cooling, toggling, and slicing actions. For example, a cleaning instruction
that executes an action and checks the resulting observation is labeled with
both \texttt{operation} and \texttt{verifier}.

\phantomsection
\label{prompt:signature-canonicalization}
\myparagraph{Cross-section signature canonicalization}
Local extraction can assign different names to the same artifact. Sec2Graph
therefore performs a second offline pass over all signature-bearing sections of
one skill. The prompt normalizes producer and consumer signatures without
changing section boundaries, roles, source pointers, or skill membership. Its
output supports the typed dependency edges used by procedural subgraph
construction.

\begin{tcolorbox}[
  title=Shared Contract Vocabulary Prompt Template,
  title after break=Shared Contract Vocabulary Prompt Template (continued),
  colback=purplemem!7,
  colframe=purplemem,
  boxrule=0.6pt,
  arc=1mm,
  breakable,
  fonttitle=\bfseries]
\small
You are a data-flow analyst for AI software-agent skill documents. You are
given all numbered procedural sections of one skill. Build a shared artifact
vocabulary so that an artifact produced by one section has exactly the same
canonical name when consumed by another.

First identify every artifact or state passed between sections. Assign each a
single descriptive \texttt{snake\_case} noun and reuse that token everywhere.
For example, use \texttt{periodogram} rather than separate tokens such as
\texttt{periodogram\_object}, \texttt{power\_array}, or
\texttt{lomb\_scargle\_result} for the same artifact.

For every section, list the canonical tokens it consumes and produces,
together with guards that must already hold. External files, datasets,
parameters, and environment states may appear as inputs without an internal
producer. Sections with no procedural data flow receive empty lists. Do not
copy inconsistent seed signatures from the input; infer one shared vocabulary
from the section contents.

\textbf{Return format (strict JSON):}

\texttt{\{}\\
\texttt{\ \ "\{SectionIndex\}": \{}\\
\texttt{\ \ \ \ "in": ["..."],}\\
\texttt{\ \ \ \ "out": ["..."],}\\
\texttt{\ \ \ \ "guards": ["..."]}\\
\texttt{\ \ \}, ...}\\
\texttt{\}}

\texttt{Skill: \{SkillName\}}\\
\texttt{Sections:}\\
\texttt{[\{Index\}] roles=\{Roles\}}\\
\texttt{\{SectionText\}}\\
\texttt{...}
\end{tcolorbox}

\phantomsection
\label{prompt:skillsbench-anchoring}
\myparagraph{\textsc{SkillsBench} task anchoring}
At inference time, PathHydrate maps an artifact-oriented task brief and its
agent-visible metadata to the structured task object $z_q$ defined in
Section~\ref{sec:method:pathhydrate}. The prompt expresses deliverables using
the same signature vocabulary as Sec2Graph and decomposes the task into ordered
subgoals for section-level seed retrieval.

\begin{tcolorbox}[
  title=Technical Task Anchoring Prompt Template,
  title after break=Technical Task Anchoring Prompt Template (continued),
  colback=kgblue!7,
  colframe=kgblue,
  boxrule=0.6pt,
  arc=1mm,
  breakable,
  fonttitle=\bfseries]
\small
You analyze a programming or data task for a node-centric skill-retrieval
engine. The engine retrieves procedural section nodes and assembles an
executable subgraph. Re-express the task so that its deliverables and operations
match section output signatures and operation contents.

Return one JSON object with exactly these fields:
\texttt{goal}, a one-sentence objective;
\texttt{domain}, a short domain tag;
\texttt{target\_outputs}, two to six concrete artifacts written as
\texttt{snake\_case} signature tokens;
\texttt{capabilities}, three to eight concise verb--object operations;
\texttt{inputs}, the resources supplied by the task; and
\texttt{sub\_goals}, an ordered list of two to five steps.
Each subgoal contains \texttt{index}, \texttt{description},
\texttt{target\_output}, \texttt{retrieval\_query}, and local
\texttt{capabilities}. Use the task's nouns and do not invent unrelated steps.

\textbf{Return format (strict JSON):}

\texttt{\{}\\
\texttt{\ \ "goal": "...", "domain": "...",}\\
\texttt{\ \ "target\_outputs": ["..."],}\\
\texttt{\ \ "capabilities": ["..."],}\\
\texttt{\ \ "inputs": ["..."],}\\
\texttt{\ \ "sub\_goals": [\{}\\
\texttt{\ \ \ \ "index": 1, "description": "...",}\\
\texttt{\ \ \ \ "target\_output": "...",}\\
\texttt{\ \ \ \ "retrieval\_query": "...",}\\
\texttt{\ \ \ \ "capabilities": ["..."]}\\
\texttt{\ \ \}]}\\
\texttt{\}}

\texttt{TASK BRIEF: \{TaskBody\}}\\
\texttt{DOMAIN HINT: \{Domain\}}\\
\texttt{OUTPUT FILES: \{OutputFiles\}}
\end{tcolorbox}

The domain hint and output-file list are derived from the same task brief and
metadata exposed to the executor. Benchmark verifier source, assertions, and
outcomes are not inputs to this prompt; they are used only after execution to
score the produced artifact.

\phantomsection
\label{prompt:alfworld-anchoring}
\myparagraph{\alfworld task anchoring}
The embodied profile produces the same task object $z_q$, but makes implicit
navigation, pickup, state-change, and placement steps explicit. This is needed
because an \alfworld goal often names only the target state and destination,
whereas retrieval must cover the complete action chain.

\begin{tcolorbox}[
  title=Embodied Task Anchoring Prompt Template,
  title after break=Embodied Task Anchoring Prompt Template (continued),
  colback=kgblue!7,
  colframe=kgblue,
  boxrule=0.6pt,
  arc=1mm,
  breakable,
  fonttitle=\bfseries]
\small
You analyze an embodied household task for an action-skill retrieval engine.
The skill library contains the following action families:
navigation to a location or receptacle, object pickup, object placement,
cleaning at a sink basin, heating with an appliance, cooling with a fridge,
device operation, and slicing with a tool.

Construct the full executable action chain. Every object-manipulation task must
make explicit the steps needed to navigate to the object, pick it up, navigate
to the destination, and place it. Insert cleaning, heating, cooling, toggling,
or slicing when required by the goal, in the order in which the agent must
execute them.

Return the same JSON schema as the technical task-anchoring prompt:
\texttt{goal}, \texttt{domain}, \texttt{target\_outputs},
\texttt{capabilities}, \texttt{inputs}, and ordered
\texttt{sub\_goals}. Use two to five end-state tokens and three to six subgoals.
Phrase each \texttt{retrieval\_query} using action verbs that directly match the
corresponding skill family. Return strict JSON only.

\texttt{TASK BRIEF: \{TaskBody\}}\\
\texttt{DOMAIN HINT: embodied household navigation}\\
\texttt{VISIBLE OBJECTS OR RECEPTACLES: \{Inputs\}}
\end{tcolorbox}

Both task-anchoring profiles are parsed into the common representation $z_q$.
The raw query is retained alongside this abstraction, allowing the dual-level
seed fusion in PathHydrate to combine signature-oriented subgoal matching with
the original task wording.

\myparagraph{Benchmark execution interfaces}
The following templates are not additional graph-construction or retrieval
stages. They specify how the context produced by PathHydrate is exposed to each
benchmark executor. The task statement and hydrated section contents are
retained unchanged; the interface adds only the retrieval and expansion
controls needed to evaluate progressive hydration.

\myparagraphunderline{\textsc{SkillsBench} context delivery}
\phantomsection
\label{prompt:skillsbench-execution}
For \textsc{SkillsBench}, the hydrated plan is written to the instruction file
read by the OpenHands or command-line executor. The template leaves the
executor's base prompt unchanged and exposes the complete source library through
a search tool. The bounded search policy prevents the agent from spending its
execution budget on repeated retrieval after the required procedure is clear.

\begin{tcolorbox}[
  title=\textsc{SkillsBench} Execution-Context Interface,
  title after break=\textsc{SkillsBench} Execution-Context Interface (continued),
  colback=softgray,
  colframe=darkgray,
  boxrule=0.6pt,
  arc=1mm,
  breakable,
  fonttitle=\bfseries]
\small
Use the following \method execution plan to solve the task:

\texttt{\{HydratedContext\}}

A skill-search tool over the complete source library is available. Locate the
tool once. At the start, issue one or two short searches using the required
artifact, format, operation, or API, and inspect the one or two most relevant
source skills for exact parameters or steps.

Then commit to implementing the solution. Do not continue searching after the
required procedure is clear. Search again only when execution reaches a
specific obstacle that another skill can address. Prefer the shortest path to
producing and verifying the required output artifacts.

\texttt{Task: \{TaskBrief\}}\\
\texttt{Workspace and available files: \{EnvironmentState\}}
\end{tcolorbox}

\newpage
\phantomsection
\label{prompt:alfworld-execution}
\myparagraphunderline{\alfworld action and source-expansion interface}
For \alfworld, the executor receives the hydrated context together with the
current observation and admissible actions. It returns either one environment
action or one request for missing procedural detail. An expansion request is
appended to the task and recent interaction state, after which PathHydrate
returns an updated $C_q$ within the fixed expansion budget.

\begin{tcolorbox}[
  title=\alfworld Action and Expansion Interface,
  title after break=\alfworld Action and Expansion Interface (continued),
  colback=softgray,
  colframe=skillgreen,
  boxrule=0.6pt,
  arc=1mm,
  breakable,
  fonttitle=\bfseries]
\small
You are an \alfworld agent. On each turn, return exactly one JSON object.

\textbf{Action turn:}\\
\texttt{\{"thought": "<one short sentence>",}\\
\texttt{\ \ "action": "<one admissible environment action>"\}}

\textbf{Expansion turn:}\\
\texttt{\{"thought": "<one short sentence>",}\\
\texttt{\ \ "expand\_request": "<the missing procedural detail>"\}}

Use exactly one of \texttt{action} or \texttt{expand\_request}. Prefer an
environment action when the next move is clear. Request expansion only when the
current hydrated context lacks a concrete procedural detail. Do not invent
tools or commands. State-changing actions such as clean, heat, or cool do not
place an object; execute a separate placement action when the goal requires it.

\texttt{Episode: \{Episode\}}\\
\texttt{Goal: \{TaskQuery\}}\\
\texttt{Hydrated executable context: \{HydratedContext\}}\\
\texttt{Current observation: \{Observation\}}\\
\texttt{Admissible actions: \{AdmissibleActions\}}
\end{tcolorbox}

\end{document}